\newcommand{\act}{\textup{act}}
\newtheorem{theorem}{Theorem}
\newtheorem{proposition}{Proposition}
\theoremstyle{definition}
\newtheorem{definition}{Definition}
\theoremstyle{remark}
\newtheorem{remark}{Remark}
\newtheorem{example}{Example}
\tikzset{
  neuron/.style = {circle, draw, minimum size=14pt, inner sep=1pt},
  >=Stealth, shorten >=1pt
}
\lstdefinestyle{pythoncode}{
    basicstyle=\fontsize{9}{9}\tt,
    captionpos=b,                   
    commentstyle=\color{OliveGreen},    
    deletekeywords={...},            
    escapeinside={\%*}{*)},         
    extendedchars=true,              
    frame=lines,
    breaklines=true,
    keepspaces=true,              
    keywordstyle=\color{blue},  
    language=python,        
    morekeywords={*,...},         
    numbers=left,         
    numbersep=5pt,    
    numberstyle=\tiny\color{black}, 
    xleftmargin=12pt,
    rulecolor=\color{black},   
    showspaces=false,              
    showstringspaces=false,
    showtabs=false, 
    stepnumber=1,    
    stringstyle=\color{red}, 
    tabsize=2,      
    title=\lstname,     
    backgroundcolor=\color{lightgray}
}
\newcommand{\myauthmarks}[1]{%
  \ensuremath{%
    \ifcase#1\or
      \relax 
    \or\|
    \else\@ctrerr
    \fi}}
\journal{Neural Networks}
\begin{document}

\begin{frontmatter}



\title{GLAI: GreenLightningAI for Accelerated Training through Knowledge Decoupling} 


\author[1]{Jose I. Mestre}\ead{jimesmir@disca.upv.es}
\author[1]{Alberto Fernández-Hernández}\ead{a.fernandez@upv.es}
\author[1]{Cristian Pérez-Corral}\ead{cpercor@upv.es}
\author[2]{Manuel F. Dolz}\ead{dolzm@uji.es}
\author[3]{Jose Duato}\ead{jose.duato@openchip.com}
\author[1]{Enrique S. Quintana-Ortí}\ead{quintana@disca.upv.es}

\fntext[eq]{All authors contributed equally to this work.}

\affiliation[1]{organization={Universitat Politècnica de València},
            city={Valencia},
            country={Spain}}
\affiliation[2]{organization={Universitat Jaume I},
            city={Castellón de la Plana},
            country={Spain}}
\affiliation[3]{organization={Openchip \& Software Technologies},
            city={Barcelona},
            country={Spain}}

\begin{abstract}
In this work we introduce GreenLightningAI (GLAI), a new architectural block designed as an alternative to conventional Multilayer Perceptrons (MLPs). The central idea is to separate two types of knowledge that are usually entangled during training: (i) \textit{structural knowledge}, encoded by the stable activation patterns induced by Rectified Linear Unit (ReLU) activations; and (ii) \textit{quantitative knowledge}, carried by the numerical weights and biases. By fixing the structure once stabilized, GLAI reformulates the MLP as a combination of paths, where only the quantitative component is optimized. This reformulation preserves the universal approximation capabilities of MLPs, while providing statistically significant evidence of faster training than the MLPs it replaces, with an average wall-clock speedup of $1.92\times$ across the six experimental settings. Crucially, GLAI is not just another classifier, but a generic block with the potential to replace MLPs for some tasks, from supervised heads with frozen backbones to projection layers in self-supervised learning or few-shot classifiers. Across diverse experimental setups, GLAI consistently matches or exceeds the accuracy of MLPs with an equivalent number of parameters, while reducing training time on average. Overall, GLAI establishes a new design principle that opens a direction for future integration into large-scale architectures such as Transformers, where MLP blocks dominate the computational footprint.
\end{abstract}


\begin{highlights}
\item Introduces GLAI for fast retraining by decoupling routing and weight updates.
\item Models ReLU MLPs using paths of active neurons with fixed activation decisions.
\item Retrains only the path weights, avoiding full weight re-optimization.
\item Uses a compact estimator to select active neuron paths during retraining.
\item Enables efficient fine-tuning as a drop-in replacement in common workflows.
\end{highlights}

\begin{keyword}
GreenLightningAI (GLAI) \sep
Knowledge Decoupling \sep
Efficient Retraining \sep
Fine-tuning \sep
Multilayer Perceptrons \sep
ReLU Networks \sep
Activation pattern \sep
Path Selector and Estimator



\end{keyword}

\end{frontmatter}


\section{Introduction}\label{sec:introduction}

\acp{mlp} have consistently remained at the core of modern \ac{dl} architectures. From the early theoretical foundations in the late 20th century \citep{HORNIK1989359, cybenko1989approximation, hornik1991approximation}, to the emergence of recurrent networks and the introduction of LSTMs for sequence modeling \citep{hochreiter1997long}, the breakthrough of convolutional networks for computer vision \citep{lecun1998gradient, krizhevsky2012imagenet}, the dominance of Transformers across modalities \citep{vaswani2017attention, devlin2019bert, brown2020language, vit}, and the recent rise of sparsely-gated \ac{moe} architectures \citep{shazeer2017outrageouslylargeneuralnetworks}, \acp{mlp} have persisted as a fundamental building block.  This endurance is explained by its strong expressive capacity: \acp{mlp} are universal approximators of nonlinear functions, capable of representing arbitrarily complex mappings. A central aspect of this expressivity arises from the combinatorial structure induced by activation functions such as \ac{relu}, which partition the input space into a collection of regions, each one defining a linear relationship between inputs and outputs, determined by binary activation patterns \citep{montufar2014number, raghu2017expressive}.  

Despite their central role, the training of \ac{mlp} modules remains both costly and opaque. To clarify this process, a conceptual distinction between two forms of knowledge can be established. We revisit this distinction here and develop it mathematically, using it as the foundation on which the posterior framework is built. The first is \emph{structural knowledge}, referring to the discrete activation patterns that determine how information flows through the network. The second is \emph{quantitative knowledge}, which refers to the numerical values generated by each neuron and subsequently propagated and combined. The same study further showed that structural knowledge converges much earlier than its quantitative counterpart: activation patterns stabilize after relatively few training epochs, whereas the numerical outputs continue to evolve as the weights adjust over longer timescales. This observation motivates the possibility of decoupling the two components, freezing the structural part once it has converged, and retraining only the quantitative one.

Building directly on this theoretical result, we present \acf{glai}, the first architecture to operationalize this principle. \ac{glai} is designed as a drop-in replacement for an \ac{mlp}: it preserves equivalent representational power while enabling substantially faster training. The core idea is simple: a reduced-size \ac{mlp} is trained until structural knowledge has converged; its activation patterns are then frozen, transforming the network into a fixed piecewise-linear system. At this point, the model can be re-expressed as a linear operator over all active paths, which can be efficiently re-trained. By fixing a sufficiently mature structural representation, \ac{glai} guarantees expressivity while dramatically accelerating the quantitative optimization.  

Our contributions can be summarized as follows:
\begin{enumerate}
    \item We introduce \ac{glai}, an architectural paradigm that replaces conventional \ac{relu}-based \acp{mlp} with an equivalent formulation of comparable parameter count. We provide formal foundations, including proofs that any \ac{mlp} can be re-expressed as a \ac{glai} model without loss of expressive power.  
    \item We empirically show that decoupling structural from quantitative knowledge allows \ac{glai} to optimize more efficiently, converging in fewer update steps while matching or even surpassing the accuracy of standard \acp{mlp}.
    \item We validate the generality of \ac{glai} in settings where \acp{mlp} play a central role: supervised heads on frozen backbones for classification, projection layers in self-supervised learning, and few-shot adaptation. These scenarios demonstrate both scalability across model widths and depths, and practical utility in domains such as computer vision and language processing.  
\end{enumerate}
This work positions \ac{glai} as a new design principle for feedforward components, rather than as a task-specific model. While our present focus is on replacing isolated \acp{mlp}, the formulation may provide a basis for future exploration in larger architectures such as Transformers, where stacked \acp{mlp} account for a substantial share of computation. Assessing how \ac{glai} can be effectively integrated into such settings remains an open direction for further research. 

The remainder of the paper is organized as follows. Section \ref{sec:related_work} reviews related work in the areas of network structure, interpretability, and training acceleration. Section \ref{sec:theoretical_framework} introduces the theoretical foundations of our approach, including formal definitions and the mathematical structure of the \ac{glai} framework. Section \ref{sec:experimental_results} presents the experimental evaluation across diverse use cases and configurations. Finally, Section \ref{sec:conclusions} concludes the paper and discusses future research directions.

\section{Related Work}\label{sec:related_work}

The foundations of \ac{glai} build upon prior work by \cite{duato}, which introduced the formal separation of structural and quantitative knowledge in \ac{relu}-based \acp{dnn} and demonstrated that activation patterns stabilize well before the numerical parameters converge. Some references regarding the study of metrics designed to assess changes in activation patterns include the works by \cite{hartman} and \cite{oui}, which evaluate the variability of structural knowledge and provide further evidence of its convergence during training. 

This perspective connects naturally with earlier analyses of \ac{relu}-based \acp{mlp} as piecewise linear functions. Work by \cite{montufar2014number} demonstrated the exponential growth of linear regions with depth, while \cite{hanin2019complexitylinearregionsdeep} showed that practical \acp{mlp} typically operate in far fewer regions. Such results suggest that the effective complexity of a trained \ac{mlp} is lower than its theoretical capacity. \ac{glai} leverages this observation by intervening once the \ac{mlp} head has implicitly committed to a stable subset of linear regions sufficient for the task.

Several authors have further developed path- and region-based views of \ac{relu}-based \acp{mlp}. For instance, \cite{meng2021mathcalgsgdoptimizingreluneural} proposed the $\mathcal{G}$-space framework, optimizing directly over active paths, while \cite{sudjianto2020unwrappingblackboxdeep} introduced Aletheia to interpret networks by decomposing them into local linear models. These works highlight the explanatory and computational value of activation patterns and paths. \ac{glai} builds on similar conceptual foundations but shifts the emphasis: instead of optimizing in a transformed parameter space or prioritizing interpretability, our approach treats paths as the central design element, yielding a novel architecture in which structural knowledge becomes fixed once its stabilization is achieved.

Efficiency gains in \ac{dl} have been pursued through both partial training and parameter-efficient adaptation. Methods such as FreezeOut \citep{brock2017freezeoutacceleratetrainingprogressively}, progressive freezing \citep{NEURIPS2022_794a425a}, and greedy layer-wise strategies \citep{belilovsky2019greedylayerwiselearningscale} show that computation can be reduced by freezing stable components of the backbone without sacrificing accuracy. In parallel, the transfer learning literature has introduced parameter-efficient techniques that add small modules while leaving most of the backbone untouched, including adapters \citep{houlsby2019adapter}, low-rank updates (\textsc{LoRA}, \citealp{hu2022lora}), bias-only tuning (\textsc{BitFit}, \citealp{benzaken2022bitfit}), and prompt-based methods \citep{lester2021prompt,li2021prefixtuning} in NLP, as well as Side-Tuning \citep{zhang2020sidetuning}, Visual Prompt Tuning \citep{jia2022vpt}, and AdaptFormer \citep{chen2022adaptformer} in vision. While these approaches are effective, they all intervene directly in the backbone by modifying or extending its architecture. In contrast, \ac{glai} follows the same philosophy of exploiting early stabilization for efficiency, but does so exclusively at the head level, leaving the pretrained backbone intact and providing an orthogonal path to resource savings.

Several learning settings highlight that the head plays a decisive role, much like in standard fine-tuning. In self-supervised representation learning, the architecture of the head is central: SimCLR demonstrated that a projection head is essential for disentangling invariances \citep{chen2020simclr}, while BYOL \citep{grill2020byol} and SimSiam \citep{chen2021simsiam} relied on predictors to stabilize training and improve downstream utility. Similarly, in few-shot learning, lightweight episodic classifiers such as Matching Networks \citep{vinyals2016matching}, Prototypical Networks \citep{snell2017protonets}, Relation Networks \citep{sung2018relation}, and MAML \citep{finn2017maml} enable rapid adaptation to novel classes with very limited data. Although these approaches pursue goals distinct from efficiency, they converge on the idea that head design is decisive for generalization. \ac{glai} builds directly on this insight, providing a structured replacement for conventional \acp{mlp} that preserves accuracy while accelerating training, thereby extending the benefits of careful head design beyond specialized regimes to standard transfer learning scenarios.

Finally, lightweight protocols such as linear probing \citep{oord2018representation,caron2021emerging} and angular classifiers like ArcFace \citep{deng2019arcface} illustrate that even simple heads can provide valuable insights into representation quality or improve class separability without modifying the backbone. These methods are computationally inexpensive and therefore serve as practical lower bounds in transfer learning pipelines, but they typically fall short of the accuracy attainable with a full \ac{mlp} head. \ac{glai} builds upon this perspective by offering a head that remains efficient while matching the validation scores of conventional \acp{mlp}, thus surpassing the limitations of purely lightweight alternatives.

\section{Theoretical Framework}\label{sec:theoretical_framework}

This section introduces the mathematical framework underlying the \ac{glai} framework. We begin by formalizing the notions of structural and quantitative knowledge through the representation of activation paths, showing that any \ac{mlp} with \ac{relu} activations can be reformulated by separating these two components. Building on this result, we define \ac{glai} as an alternative model and demonstrate that every \ac{mlp} can be equivalently represented in this form. This section also includes other theoretical analysis like criteria for identifying the appropriate moment to apply \ac{glai} during training, a method to construct \ac{glai} models with parameter counts comparable to the original \ac{mlp}, and a pruning criterion for the estimator.

\subsection{Structural and Quantitative Knowledge in MLPs}

We begin by formally defining what we mean by \emph{structural} and \emph{quantitative} knowledge in the context of \acp{mlp}. To set the stage, let us first fix the notation for the activation function. Hereafter, denote the \ac{relu} function by $\sigma: \mathbb{R}\rightarrow \mathbb{R}$, defined as $\sigma(z) = \max(0,z)$. For brevity, the same symbol $\sigma$ will also denote the component-wise extension $\mathbb{R}^n\rightarrow \mathbb{R}^n$, where $\sigma(z)_i = \sigma(z_i)$ for all $i \in \{1,\ldots,n\}$.

Although the definition of an \ac{mlp} is standard, we include it here briefly in order to unify notation and provide a consistent basis for the concepts introduced in this section.  

\begin{definition}
A \textbf{\acf{mlp}} with \ac{relu} activation and $L$ hidden layers is a mapping $f: \mathbb{R}^{n_0}\rightarrow \mathbb{R}^{n_{L+1}}$ that can be expressed as a composition $f=g_L\circ g_{L-1}\circ \ldots \circ g_1 \circ g_0$, where $g_0: \mathbb{R}^{n_0}\rightarrow \mathbb{R}^{n_1}$ is an affine mapping given by $g_0(x)=W_0 \cdot x+b_0$ with $W_0\in \mathbb{R}^{n_1 \times n_0}$ and $b_0\in \mathbb{R}^{n_1}$, and $g_l: \mathbb{R}^{n_l}\rightarrow \mathbb{R}^{n_{l+1}}$ is described as $g_l(x)=W_l \cdot \sigma(x)+b_l$, where $W_l\in \mathbb{R}^{n_{l+1} \times n_l}$ and $b_l\in \mathbb{R}^{n_{l+1}}$ for $l\in \{1, \ldots, L\}$. In other words, $f$ can be expressed as a composition
\begin{equation*}
\mathbb{R}^{n_0} \xrightarrow{W_0\cdot x+b_0} \mathbb{R}^{n_1} \xrightarrow{\sigma} \mathbb{R}^{n_1} \xrightarrow{W_1\cdot x+b_1} \mathbb{R}^{n_2} \rightarrow \ldots \rightarrow \mathbb{R}^{n_L} \xrightarrow{\sigma} \mathbb{R}^{n_L} \xrightarrow{W_L\cdot x+b_L} \mathbb{R}^{n_{L+1}},
\end{equation*}
alternating between affine transformations and \ac{relu} activations.
\end{definition}

For convenience, we denote by $f_l : \mathbb{R}^{n_0}\rightarrow \mathbb{R}^{n_{l+1}}$ the mapping $f_l = g_l \circ g_{l-1} \circ \ldots \circ g_0$ for $l\in \{0, \ldots, L\}$ that provides the intermediate values along the hidden layers in the neural network, in such a way that $f_0=g_0$ and $f_L = f$. 

\begin{remark} As is well-established, any affine transformation in the form of $x\mapsto W\cdot x+b$ can be regarded as a linear transformation by augmenting the dimensionality of both the input and output spaces with an additional unit, owing to the identity:
\begin{equation*}
    \begin{bmatrix}    W & b \\    0 & 1\end{bmatrix} \cdot\begin{bmatrix}    x \\    1\end{bmatrix} = \begin{bmatrix}    W\cdot x +b \\    1\end{bmatrix}.
\end{equation*}
Therefore, it is permissible, without loss of generality, to assume that $b_l = 0$ for all $l\in \{0,\ldots, L\}$. Consequently, this theoretical section exclusively considers \acp{mlp} devoid of bias parameters.  
\end{remark}

A pivotal definition in this mathematical framework is that of activation pattern, as it will form the basis for the subsequent construction.

\begin{definition}
For a fixed sample $x\in \mathbb{R}^{n_0}$, a neuron is said to be \textbf{active} if the value it outputs is positive, and \textbf{inactive} otherwise. The \textbf{activation pattern} of $x$ is defined as a list of $L$ vectors $(\text{act}_1(x), \ldots, \text{act}_L(x))$, where each $\text{act}_l(x)\in \{0,1\}^{n_l}$ contains $n_l$ binary values, determined based on whether the $n_l$ neurons of layer $l$ are active or inactive for the sample $x$. Formally, since the outputs of the $n_l$ neurons of layer $l$ are given by the vector $f_{l-1}(x)
\in \mathbb{R}^{n_l}$, it follows that
\begin{equation*}
\text{act}_l(x)= \sigma' \circ f_{l-1}(x),
\end{equation*}
where $\sigma'(z)_i = 1$ if $z_i>0$, and $\sigma'(z)_i = 0$ otherwise.
\end{definition}

The activation patterns of an \ac{mlp} capture its expressive capacity, and their evolution during training is central to the model’s ability to adapt to the data. A key phenomenon in \ac{relu}-based \acp{mlp} is that the network behaves linearly on the subset of inputs $x \in \mathbb{R}^{n_0}$ that share the same activation pattern, as formalized in the following result:

\begin{proposition}\label{prop:linear}
Let $A=(A_1, \ldots, A_L)$ denote a predefined activation pattern, and define the diagonal matrix $D_l = \textup{diag} (A_l)$ of size $n_l\times n_l$ where the diagonal elements are determined by the vector $A_l\in \{0,1\}^{n_l}$. Then, for every $x\in \mathbb{R}^{n_0}$ with activation pattern $A$, it holds that
\begin{equation*}
f(x)=W_L \cdot D_L \cdot W_{L-1} \cdot D_{L-1} \cdot \ldots \cdot W_1 \cdot D_1 \cdot W_0 \cdot x.
\end{equation*}
\end{proposition}
\begin{proof}
    We will prove by induction that $f_l (x) = W_l  \cdot  D_l \cdot  \ldots \cdot W_1 \cdot  D_1  \cdot W_0  \cdot x$ for all $l\in \{0,\ldots, L\}$. Since $f_L=f$, the result follows by setting $l=L$. For $l=0$, the result is trivial, as $f_0(x)=W_0\cdot x$ by definition. On the other hand, if $x$ has activation pattern $A$, then $\text{act}_l(x)=A_l$, implying that $\text{diag} (\text{act}_l(x)) =D_l$. Moreover, it naturally holds that $\sigma (z) = \text{diag} (\sigma'(z)) \cdot z$, and thus 
    \begin{align*}
        f_{l+1}(x) &= g_{l+1} \circ f_l (x) = W_{l+1} \cdot \sigma (f_l(x)) = W_{l+1} \cdot \text{diag} (\sigma'(f_l(x))) \cdot f_l (x) =\\
        & = W_{l+1}\cdot \text{diag} (\text{act}_{l+1}(x)) \cdot f_l(x) = W_{l+1} \cdot D_{l+1} \cdot f_l(x).
    \end{align*}
    Hence, the result follows.
\end{proof}
\begin{remark}
The fact established in the preceding proposition is clearly not a novelty, though it remains relatively underrepresented in the literature. Notably, \citet[Theorem~1]{sudjianto2020unwrappingblackboxdeep} already states this property in a related context. We include it here both for completeness and because we regard it as a fundamental yet underexploited perspective: despite its simplicity, the piecewise-linear nature of \ac{relu}-based networks rarely appears explicitly in modern treatments, even though it provides valuable insight for the developments that follow.
\end{remark}

Consequently, every \ac{relu}-based \ac{mlp} is a piecewise-linear function, where linearity holds within each region defined by a fixed activation pattern. In other words, activation patterns define regions of linearity of the network as a mapping. Once these stabilize, it is essentially a large but fixed piecewise-linear system.

Given a specific activation pattern, one can observe different paths across active neurons through which information flows in the \ac{mlp}. This phenomenon motivates the following definition:

\begin{definition}
    A \textbf{path} $\pi$ of the \ac{mlp} $f$ is a tuple $\pi=(\pi_0, \ldots, \pi_{L+1})$, where $\pi_l \in \{1, \ldots, n_l\}$ for all $l \in \{0, \ldots, L+1\}$. The index $\pi_0$ specifies the input coordinate where the path starts, the indices $\pi_1,\ldots,\pi_L$ indicate the positions of the hidden neurons traversed along the path, and $\pi_{L+1}$ denotes the output neuron where the path ends. 
\end{definition}

Consequently, one can visualize a path $\pi$ as a polygonal line across the neural network. Furthermore, each connection along a path, linking a neuron in layer $l$ to one in layer $l+1$, is associated with a weight, \textit{i.e.}, an element of the matrix $W_l$ that determines the contribution when moving from one layer to the next. This suggests introducing the following concept:
\begin{definition} 
    The \textbf{weight of a path} $\pi=(\pi_0, \ldots, \pi_{L+1})$ is the product of all the weights traversed along the path. Formally, if $w^l_{u,v}$ denotes the $(i,j)$ coordinate of the weight matrix $W_l$ associated with $f_l$, then the weight of $\pi$ is defined as
    \begin{equation*}
        \omega_\pi = w^0_{\pi_1,\pi_0}w^1_{\pi_2, \pi_1} \ldots w^L_{\pi_{L+1}, \pi_L}.
    \end{equation*}
\end{definition}

Next, the concepts of active and inactive path are presented for a specific sample $x\in \mathbb{R}^{n_0}$, along with a couple of functions related to this notion.

\begin{definition} Let $\pi=(\pi_0, \ldots, \pi_{L+1})$ be a path of the neural network $f$.
\begin{enumerate}
    \item Given a sample $x\in \mathbb{R}^{n_0}$, $\pi$ is said to be \textbf{active path for} $x$ if all hidden neurons through which $\pi$ passes are active. Formally, this occurs when $\text{act}_l (x)_{\pi_l}=1$ for all $l\in \{1, \ldots, L\}$.
    \item Denote the \textbf{indicator function} of $\pi$ as the function $\text{ind}_\pi:\mathbb{R}^{n_0}\rightarrow \{0,1\}$ defined as $\text{ind}_\pi(x) = 1 $ when $\pi$ is active for the sample $x$, and $\text{ind}_\pi(x)=0$ otherwise. 
    \item Define the \textbf{contribution function} of $\pi$ as the function $c_\pi:\mathbb{R}^{n_0}\rightarrow \mathbb{R}$ obtained as $c_\pi(x)=\text{ind}_\pi(x) \cdot x_{\pi_0}$, where $x_{\pi_0}$ is the coordinate of $x$ from which $\pi$ starts. In other terms, $c_\pi(x)$ returns the coordinate of $x$ from which path $\pi$ originates when the path is active, and returns $0$ otherwise.
\end{enumerate}
\end{definition}

The definition of the contribution function may initially appear arbitrary, yet its role becomes clear once we recognize that a path influences the \ac{mlp} solely through the coordinate from which it originates. This perspective is formalized in the following theorem, which shows that $f$ can be expressed as a linear combination of the contribution functions of all paths, each weighted by its corresponding parameters.

\begin{theorem}\label{thm:decomp}
Let $f$ be a \ac{relu}-based \ac{mlp}, let $c_1, c_2, \ldots, c_P$ denote the contribution functions of the $P$ paths of $f$ terminating at neuron $i$ in the last layer, with $i\in \{1, \ldots, n_{L+1}\}$, and let $\omega_1, \ldots, \omega_P$ represent their associated weights. Then,
\begin{equation*}
f(x)_i = \sum_{p=1}^P \omega_p c_p(x).
\end{equation*}
\end{theorem}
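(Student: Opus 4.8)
The plan is to start from the closed-form expression of Proposition~\ref{prop:linear} and expand the chain of matrix products coordinate by coordinate, recognizing that each scalar term produced corresponds to exactly one path terminating at neuron $i$. Concretely, fix $x\in\mathbb{R}^{n_0}$ and let $A=(\text{act}_1(x),\ldots,\text{act}_L(x))$ be its activation pattern, so that $D_l=\text{diag}(\text{act}_l(x))$. By Proposition~\ref{prop:linear} we have $f(x)=W_L D_L W_{L-1} D_{L-1}\cdots W_1 D_1 W_0\, x$. Taking the $i$-th coordinate and writing every matrix--vector contraction as an explicit sum over the intermediate neuron indices $\pi_0,\pi_1,\ldots,\pi_L$ (with the output index fixed to $i$) yields a multiple sum whose generic summand is a product of the weight entries $w^0_{\pi_1,\pi_0},w^1_{\pi_2,\pi_1},\ldots,w^L_{i,\pi_L}$, the diagonal entries $(\text{act}_l(x))_{\pi_l}$ for $l=1,\ldots,L$, and the factor $x_{\pi_0}$.

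The key step is then to read off the three groups of factors in each summand and match them to the definitions. The product of weight entries is, by definition, exactly the path weight $\omega_\pi$ of the path $\pi=(\pi_0,\pi_1,\ldots,\pi_L,i)$. The product of diagonal entries $\prod_{l=1}^L (\text{act}_l(x))_{\pi_l}$ equals $1$ precisely when every hidden neuron traversed by $\pi$ is active for $x$, and $0$ otherwise; this is by definition the indicator $\text{ind}_\pi(x)$. Multiplying by the remaining factor $x_{\pi_0}$ therefore reconstructs the contribution function $c_\pi(x)=\text{ind}_\pi(x)\,x_{\pi_0}$, so each summand equals $\omega_\pi c_\pi(x)$. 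Since the multi-index $(\pi_0,\ldots,\pi_L)$ ranges over all $P$ paths ending at neuron $i$, summing the terms gives $f(x)_i=\sum_{p=1}^P \omega_p c_p(x)$, as claimed.

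I do not expect a genuine obstacle here: the argument is essentially a relabeling of the terms generated by expanding an associative product of matrices, and the diagonal structure of the $D_l$ is exactly what lets the indicator function emerge cleanly from the product of diagonal entries. The only points requiring care are bookkeeping ones, namely keeping the index conventions of $W_l$ and of the path tuple consistent throughout the expansion, and observing that the identity holds simultaneously for every $x$: each side evaluates using the activation pattern of its own argument, so the resulting decomposition is a global function identity rather than one valid only within a single linear region.
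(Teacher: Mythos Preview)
Your proposal is correct and follows essentially the same approach as the paper: both invoke Proposition~\ref{prop:linear}, expand the $i$-th coordinate of the matrix product as a multi-indexed sum over $(\pi_0,\ldots,\pi_L)$, and then identify the weight factors with $\omega_\pi$, the product of diagonal activation entries with $\text{ind}_\pi(x)$, and the remaining factor $x_{\pi_0}$ so that each summand becomes $\omega_\pi c_\pi(x)$. The only cosmetic difference is that the paper writes out the intermediate double expression explicitly before regrouping, whereas you describe the regrouping verbally.
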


\begin{proof}
Let $x\in \mathbb{R}^{n_0}$ be a fixed sample. By Proposition \ref{prop:linear}, if one writes $D_l = \text{diag}(\act_l(x))$ for $l\in \{0, \ldots, L\}$, it then follows that
\begin{equation*}
f(x) = W_L \cdot D_L \cdot W_{L-1} \cdot D_{L-1} \cdot \ldots \cdot W_1 \cdot D_1 \cdot W_0 \cdot x.
\end{equation*}
Rewriting this product in terms of the components $w^l_{u,v}$ of the matrices $W_l$, for a fixed $i\in \{1, \ldots, n_{L+1}\}$, yields that 
\begin{align*}
        f(x)_i & = \sum_{i_L=1}^{n_L} \ldots \sum_{i_0=1}^{n_0} w^L_{i, i_L}  \act_k(x)_{i_L}  w^{L-1}_{i_L, i_{L-1}}  \act_{L-1}(x)_{i_{L-1}}  \ldots  w^1_{i_2, i_1}  \act_1(x)_{i_{1}}  w^0_{i_1, i_0} x_{i_0} \\
        &= \sum_{i_L=1}^{n_L} \ldots \sum_{i_0=1}^{n_0} w^0_{i_1, i_0} \ldots  w^{L-1}_{i_L, i_{L-1}} w^L_{i, i_{L}}  \act_1(x)_{i_{1}} \ldots \act_{L-1}(x)_{i_{L-1}}\act_L(x)_{i_L} x_{i_0}.
    \end{align*}
On the one hand, the product of weights corresponds to the weight of the path passing through the neurons in positions $i_0, \ldots, i_L , i$, which is the path $(i_0, i_1, \ldots, i_L, i)$. On the other hand, notice that the definition given for active path yields that the indicator function can be calculated as the product of the binary values
\begin{equation*}
    \text{ind}_{(i_0, i_1, \ldots, i_L, i)}(x) = \text{act}_1(x)_{i_1}\cdot \text{act}_2(x)_{i_2} \cdot \ldots \cdot \text{act}_L(x)_{i_L}.
\end{equation*}
Indeed, the product is $1$ if and only if each neuron of the path is active for $x$, and $0$ otherwise. Hence, the product of the activations with the input $x_{i_0}$ corresponds to the contribution function of the same path. It then follows that
\begin{equation*}
    f(x)_i = \sum_{i_L=1}^{n_L} \ldots \sum_{i_0=1}^{n_0} w_{(i_0, i_1, \ldots, i_L, i)}c_{(i_0, i_1, \ldots, i_L, i)}(x).
\end{equation*}
As this summation is carried out over all paths terminating at neuron $i$ in the last layer, a relabeling of the paths yields the expression
\begin{equation*}
f(x)_i = \sum_{p=1}^P w_p c_p(x),
\end{equation*}
establishing the theorem's statement.
\end{proof}
\begin{remark}
A related formulation to Theorem~\ref{thm:decomp} appears in \citet{meng2021mathcalgsgdoptimizingreluneural}, whose Equation~(1) also expresses the output of a \ac{relu}-based \ac{mlp} as a sum over active paths with effective weights. While their result provides an expression that is mathematically close to ours, its role in their work is limited to the analysis of optimization dynamics. In contrast, our contribution lies in leveraging this decomposition as the foundation of a new architectural paradigm which explicitly exploits the separation between structural and quantitative knowledge to accelerate training. Thus, although the algebraic resemblance may not be entirely novel, the theoretical perspective and its implications for model design introduced here are completely original.
\end{remark}

This demonstrates that every \ac{mlp} can be expressed as a linear combination of the contribution functions of its constituent paths, weighted by their associated path weights. Specifically, this formally illustrates that the knowledge held by an \ac{mlp} can be divided into two distinct types:
\begin{itemize}
\item The \textbf{structural knowledge}, entirely determined by the paths forming the model (specifically, through the contribution functions $c_p(x)$ of each path); and
\item The \textbf{quantitative knowledge}, determined by the weights $\omega_p$ associated to the paths of the \ac{mlp}.
\end{itemize}

This conceptual separation is not merely of theoretical interest. As will be shown later in Section \ref{subsec:path_distance}, the structural component of a trained \ac{dnn} stabilizes early during training and can be preserved without loss in validation scores. Consequently, once an \ac{mlp} has reached a sufficiently mature stage of structural knowledge, it can be reformulated according to Theorem~\ref{thm:decomp}, allowing the training to focus solely on the linear component associated with quantitative knowledge. The framework derived from this reformulation provides the foundation for the developments presented in the rest of this work.  

\subsection{The GLAI Framework}

Building on the previous definitions and results, we now formally introduce the \ac{glai} framework, presented here for the first time in the literature. This framework defines a novel paradigm in which models are decomposed into two complementary components, corresponding to the two forms of knowledge inherent to an \ac{mlp}: structural and quantitative.

The key idea is to abstract the notion of path in an \ac{mlp} by retaining only its contribution function and associated weight. As shown in Theorem~\ref{thm:decomp}, this information suffices to recover the full output of the network. Models within the \ac{glai} framework can thus be interpreted as linear with respect to the contribution functions, while these functions themselves encode the nonlinear structure of the data. This separation ensures fast training while preserving expressive power, since contribution functions capture meaningful nonlinearities.

\begin{definition}
A \textbf{\ac{glai} model} designed to infer features $y \in \mathbb{R}^m$ from samples $x \in \mathbb{R}^n$ is a mapping $\phi:\mathbb{R}^n \rightarrow \mathbb{R}^m$, where for each $i \in \{1, \ldots, m\}$, the output coordinate $y_i = \phi(x)_i$ is determined by
\begin{enumerate}[label=(\arabic*)]
\item contribution functions $c_p(x)$, for $p \in \{1, \ldots, P_i\}$, which return a fixed coordinate of $x$ within a defined piecewise linear region of the sample space, and 0 otherwise; and
\item associated weights $\omega_p \in \mathbb{R}$ for $p \in \{1, \ldots, P_i\}$,
\end{enumerate}
so that
$$\phi(x)_i = \sum_{p=1}^{P_i} \omega_p c_p(x).$$
\end{definition}

\begin{remark}
In the \ac{glai} framework, the concept of path is abstracted: instead of corresponding to a specific sequence of neurons in the network, a ``path'' is represented solely by a contribution function and its associated weight. This abstraction extends the notion of path beyond the strict architecture of \acp{mlp}. Moreover, the number of paths used to define each output coordinate need not be identical, unlike in conventional \acp{mlp}.
\end{remark}

According to Theorem~\ref{thm:decomp}, every \ac{relu}-based \ac{mlp} can be exactly expressed as a \ac{glai} model, which implies that the class of functions realized by \acp{mlp} is strictly contained within that of \ac{glai}. Since each output coordinate of a \ac{glai} model is built as a finite affine combination of contribution functions, and these are piecewise affine, the resulting mapping is always piecewise affine. Thus, the \ac{glai} framework preserves the desirable property of piecewise linearity while extending expressiveness beyond traditional \acp{mlp}.

A distinctive advantage of the \ac{glai} representation is the notion of \textbf{path virtualization}, which treats paths as independent entities. In standard \acp{mlp}, different paths may overlap through shared neurons, so a change in a single parameter $w^i_{u,v}$ affects all paths that traverse neuron $v$ in layer $i$ and neuron $u$ in layer $i+1$. By contrast, the \ac{glai} framework assigns independent weights to each path, effectively decoupling their influence. This virtualization enhances model flexibility and, as shown in Section~\ref{sec:experimental_results}, expands representational capacity compared to conventional \acp{mlp}. Furthermore, once the structural knowledge of the network has been fixed, path virtualization provides a mechanism to mitigate accuracy loss during retraining, a property confirmed empirically in our experiments.

In this setting, each coordinate $\phi(x)_i$ of a \ac{glai} model $f$ can be naturally described as a two-stage process. The first stage, the \textbf{path selector}, is defined by a mapping $S_i:\mathbb{R}^n \to \mathbb{R}^{P_i}$ that assigns to each input $x \in \mathbb{R}^n$ a vector $S_i(x) = (c_1(x), \ldots, c_{P_i}(x))$, where each component $c_p(x)$ represents the contribution of a path associated with the output coordinate $i$. The second stage, the \textbf{estimator}, is a linear mapping $L_i:\mathbb{R}^{P_i} \to \mathbb{R}$, given by $L_i(c_1, \ldots, c_{P_i}) = \omega_1c_1 + \cdots + \omega_{P_i}c_{P_i}$. Together, these two stages yield the decomposition $f(x)_i = L_i \circ S_i(x)$: first, the active paths are identified through $S_i(x)$, and then their contributions are aggregated by $L_i$ through a weighted linear combination.  

For a compact expression of the full output, let $S(x) = (S_1(x), \ldots, S_m(x))$ denote the concatenated vector of all path contributions, with $P_1+\ldots + P_m$ entries ordered by their corresponding output coordinates. If $\omega_p^i$ denotes the weight associated with path $p$ targeting coordinate $i$, $\tilde{\omega}^i$ denotes the row vector $[\omega_1^i, \, \omega_2^i, \, \ldots,\, \omega_{P_i}^i ]$ and $\Lambda$ denotes the matrix
\begin{equation*}
    \begin{bmatrix}
    \tilde{\omega}^1 &  &  & \\
     & \tilde{\omega}^2 &  & \\
     &  & \ddots & \\
     &  &  &\tilde{\omega}^m
\end{bmatrix}\in \mathbb{R}^{m \times (P_1+\ldots+ P_m)},
\end{equation*}
then $f(x) = \Lambda \cdot S(x)^T$. This formulation highlights the two-stage nature of \ac{glai} models: an input $x \in \mathbb{R}^n$ first propagates through the path selector stage, which determines the active paths and constructs $S(x)$. In the second stage, the estimator, a structured linear operator with parameter matrix $\Lambda$, combines these contributions to produce the output, with each path linked exclusively to a single output neuron (see Figure~\ref{fig:glai}).

\begin{figure}[h]
\centering
\includegraphics[]{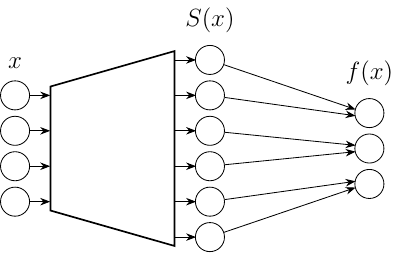}
\caption{Representation of a \ac{glai} model for samples $x\in \mathbb{R}^4$, target values $f(x)\in \mathbb{R}^3$. In this example representation, there are 6 paths in total, distributed in a ratio of 2 paths per output coordinate.}
\label{fig:glai}
\end{figure}

\subsection{GLAI in Practice}

In practical terms, applying \ac{glai} requires starting from an \ac{mlp} whose structural knowledge has reached a sufficiently stable stage. To establish a fair comparison and highlight the capabilities of \ac{glai}, we begin with a reference \ac{mlp} (hereafter the original \ac{mlp}), chosen with an architecture suitable for the target task and trained until full convergence.

For the \ac{glai} framework, however, we proceed differently. We start from a smaller \ac{mlp}, trained only for a reduced number of epochs enough to ensure stabilization of its structural knowledge but not full convergence. At this point, the \ac{glai} framework is applied: the structural knowledge of the reduced network is frozen, and the model is rewritten as a linear estimator defined over the path space. Concretely, the estimator is a linear model with one parameter per path, operating on $S(x)$, which encodes for each input $x$ the contribution functions of the corresponding paths. Although the number of paths grows exponentially with network depth, only a small fraction contributes effectively to the output. This redundancy enables aggressive pruning, drastically reducing the size of the estimator while preserving predictive accuracy.

The pruned estimator is then trained to convergence. The pruning ratio is selected so that the final estimator has a number of parameters equal to the difference between those of the original \ac{mlp} and the reduced \ac{mlp}, ensuring a fair comparison between \ac{mlp} and \ac{glai}. As a result, the full \ac{glai} framework unfolds in two phases: (i) a short training stage for a reduced \ac{mlp}, sufficient to stabilize structural knowledge, and (ii) the training of the pruned estimator obtained from rewriting this reduced \ac{mlp} within the \ac{glai} framework. Section~\ref{sec:experimental_results} confirms that this two-phase procedure consistently requires substantially less training time than the original \ac{mlp}, while achieving similar or even superior validation scores in most scenarios.

Figure~\ref{fig:glai_diagram} shows a diagram of how the \ac{glai} pipeline is carried out in practice. First, the reduced \ac{mlp} is trained and then stored as a fixed copy, which is used through forward passes to obtain the corresponding activations. Subsequently, the linear system derived from the estimator is trained by updating the weights associated with the paths, initialized from the already trained reduced model.

\begin{figure}[h]
    \centering
    \includegraphics[width=1\linewidth]{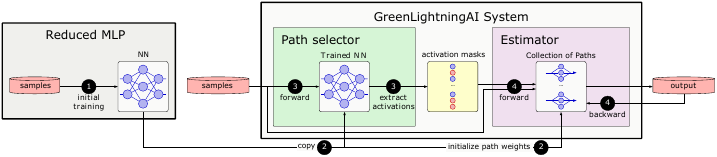}
    \caption{\ac{glai} training process. A reduced \ac{mlp} is first trained for a limited number of epochs to stabilize its structural knowledge. The network is then rewritten within the \ac{glai} framework by splitting it into a fixed path selector, which preserves the learned structure, and a trainable linear estimator defined over the selected path contributions. Finally, the estimator is pruned and trained to convergence while the path selector remains frozen.}
    \label{fig:glai_diagram}
\end{figure}

The next subsections present the implementation details on how the operations are performed as well as formalize the application criterion, the pruning strategy, and the parameter-count matching procedure used throughout our experiments.

\subsection{Implementation}
\label{sec:glai:implementation}

Listing~\ref{lst:glai:pytorch_reference} shows a simplified PyTorch implementation of the estimator. The computation first combines the input samples with the binary activations produced by the path selector, and then applies the estimator weights to obtain the output.

\begin{lstlisting}[
language=Python,
caption={Reference PyTorch implementation of the path-conditioned estimator},
label=lst:glai:pytorch_reference
]
# x_input:     [batch_size, n_inputs]
# weight_act:  [batch_size, n_weights]
# self.weight: [n_outputs, n_inputs, n_weights]

outer = torch.einsum("bi,bs->bis",
    (x_input, weight_act.float()))

output = torch.einsum("bis,ois->bo",
    (outer, self.weight))
\end{lstlisting}

Although this implementation is concise, it explicitly materializes the intermediate tensor containing the merged samples and path activations. To avoid this overhead, the optimized implementation uses custom CUDA kernels that directly accumulate the contribution of the active paths, as shown in Listing~\ref{lst:glai:forward_kernel}. The listing presents the computation as a sequential loop for clarity, while the actual CUDA kernel parallelizes the work across samples, outputs, and estimator components. As a result, the system only needs to store the weights and their gradients, together with the inputs and outputs of the DNN, in the same manner as any other DNN. A detailed comparison between the number of trainable parameters in GLAI and in an equivalent MLP is provided in Section~\ref{subsec:GLAI_pruning}.

\begin{lstlisting}[
language=C,
caption={Forward estimator},
label=lst:glai:forward_kernel
]
for(o = 0; o < n_outputs; o++){
    for(s = 0; s < n_samples; s++){
        y[s,o] = 0;

        for(i = 0; i < n_inputs; i++){
            sum_weights = 0;

            for(j = 0; j < n_weights; j++){
                sum_weights += weight_activations[s,j] * weights[i,j,o];
            }

            y[s,o] += sum_weights * x[s,i];
        }
    }
}
\end{lstlisting}

Since the custom CUDA operation is not visible to PyTorch's \texttt{autograd} system, the backward pass is implemented explicitly. Listing~\ref{lst:glai:backward_kernel} shows the corresponding gradient computation for the estimator weights.

\begin{lstlisting}[
language=C,
caption={Backward estimator},
label=lst:glai:backward_kernel
]
for(o = 0; o < n_outputs; o++){
    for(s = 0; s < n_samples; s++){
        dy_partial = dy[s,o];

        for(i = 0; i < n_inputs; i++){
            partial = x[s,i] * dy_partial;

            for(j = 0; j < n_weights; j++){
                dw[i,j,o] += weight_activations[s,j] * partial;
            }
        }
    }
}
\end{lstlisting}

The custom kernels therefore preserve the behavior of the reference implementation while avoiding the explicit construction of the intermediate path-conditioned tensor.

\subsection{Path Distance Definition}\label{subsec:theoretical_path_distance}

We next introduce a notion of distance between two paths of a \ac{glai} model. This definition will be a key ingredient in the remainder of this section, enabling the practical criteria and algorithms required to deploy \ac{glai}.

To define the path distance function for a \ac{glai} model $\phi:\mathbb{R}^{n_0}\to \mathbb{R}^{n_{L+1}}$, we fix a reference set $\Omega \subseteq \mathbb{R}^{n_0}$ on which $\phi$ is expected to achieve reliable scores. In practice, $\Omega$ can be chosen as the training or validation set, although the theoretical development remains valid for any arbitrary set $\Omega$, whether finite or infinite.  

\begin{definition}
Let $\Omega$ be a finite set, and consider two paths $\pi, \tilde{\pi}$ of $\phi$ originating from the same input coordinate $x_i$, with $i \in \{1,\ldots,n_0\}$. The distance between $\pi$ and $\tilde{\pi}$ with respect to $\Omega$ is defined as
\[
d_{\Omega}(\pi,\tilde{\pi}) = \frac{1}{|\Omega|}\sum \{|x_i|: x \in \Omega, \text{ind}_\pi(x)\neq \text{ind}_{\tilde{\pi}}(x)\} .\]
Intuitively, $d_{\Omega}(\pi,\tilde{\pi})$ measures the number of samples $x \in \Omega$ for which the two paths are not simultaneously active or inactive, weighted by the magnitude of the initial input coordinate $|x_i|$, and normalized by the cardinality $|\Omega|$.
\end{definition}

At a theoretical level, this definition can be extended to any set $\Omega$ equipped with a measure $m$. In that case, we define
\[
d_{\Omega}(\pi,\tilde{\pi}) = \frac{1}{m(\Omega)} \int_{\{x\in \Omega: \,\mathrm{ind}_\pi(x)\neq \mathrm{ind}_{\tilde{\pi}}(x)\}} |x_i| \, dx.
\]
Naturally, if $\Omega$ is finite and $m$ is the counting measure, this integral reduces to the discrete definition above. For this reason, the theoretical exposition will employ the general measure-based notation, while keeping in mind that in practice $\Omega$ will typically be a finite subset of the training samples of the network. From now on, we will simply write $d(\pi,\tilde{\pi})$ whenever the reference set $\Omega$ is clear from context.  

The next result reformulates the distance function in terms of the contribution functions:
\begin{proposition}
Let $c$ and $\tilde{c}$ be the contribution functions associated with paths $\pi$ and $\tilde{\pi}$, respectively. Then,
\[
d(\pi,\tilde{\pi}) = \frac{1}{m(\Omega)} \int_{\Omega} |c(x) - \tilde{c}(x)| \, dx.
\]
\end{proposition}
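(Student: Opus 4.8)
The plan is to evaluate the right-hand side directly and recognize it as the defining integral of $d(\pi,\tilde{\pi})$. The essential structural input is the hypothesis that $\pi$ and $\tilde{\pi}$ share the same starting coordinate, say $\pi_0 = \tilde{\pi}_0 = i$. Under this assumption the two contribution functions carry the common factor $x_i$: by definition $c(x) = \mathrm{ind}_\pi(x)\, x_i$ and $\tilde{c}(x) = \mathrm{ind}_{\tilde{\pi}}(x)\, x_i$. Subtracting and taking absolute values gives $|c(x) - \tilde{c}(x)| = |\mathrm{ind}_\pi(x) - \mathrm{ind}_{\tilde{\pi}}(x)|\,|x_i|$, so the shared coordinate is precisely what lets me pull $|x_i|$ out of the comparison between the two paths.

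The second step is to identify the factor $|\mathrm{ind}_\pi(x) - \mathrm{ind}_{\tilde{\pi}}(x)|$ with the indicator of the disagreement set. Since both indicator functions take values in $\{0,1\}$, their difference lies in $\{-1,0,1\}$, and its absolute value equals $1$ exactly when $\mathrm{ind}_\pi(x)\neq \mathrm{ind}_{\tilde{\pi}}(x)$ and $0$ otherwise. Hence $|\mathrm{ind}_\pi(x) - \mathrm{ind}_{\tilde{\pi}}(x)| = \mathbf{1}_{\{\mathrm{ind}_\pi \neq \mathrm{ind}_{\tilde{\pi}}\}}(x)$, so the integrand $|c(x)-\tilde{c}(x)|$ equals $|x_i|$ on the set where the paths disagree and vanishes elsewhere.

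Combining these two observations, the integral over $\Omega$ restricts automatically to the disagreement set, giving
\[
\int_\Omega |c(x) - \tilde{c}(x)|\, dx = \int_{\{x\in\Omega:\,\mathrm{ind}_\pi(x)\neq \mathrm{ind}_{\tilde{\pi}}(x)\}} |x_i|\, dx,
\]
and dividing by $m(\Omega)$ returns exactly the measure-theoretic definition of $d(\pi,\tilde{\pi})$. In the finite case the same chain of equalities holds verbatim with the integral replaced by the weighted sum, recovering the discrete definition stated earlier.

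I do not anticipate a genuine obstacle: the argument is a one-line factorization followed by the elementary fact that the absolute difference of two $\{0,1\}$-valued functions is the indicator of their disagreement. The only point requiring care, and worth stating explicitly, is the role of the common starting coordinate. If $\pi$ and $\tilde{\pi}$ originated from different input indices, the factorization $|c-\tilde{c}| = |\mathrm{ind}_\pi - \mathrm{ind}_{\tilde{\pi}}|\,|x_i|$ would break down, since the two contribution functions would no longer share a single factor $x_i$; thus the hypothesis is not cosmetic but load-bearing, and I would flag it at the outset of the proof.
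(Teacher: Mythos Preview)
Your proof is correct and follows essentially the same approach as the paper's: factor $|c(x)-\tilde{c}(x)|$ as $|x_i|\cdot|\mathrm{ind}_\pi(x)-\mathrm{ind}_{\tilde{\pi}}(x)|$ using the shared starting coordinate, then observe that the absolute difference of two $\{0,1\}$-valued functions is the indicator of their disagreement set. Your explicit remark that the common starting coordinate is load-bearing is a welcome addition not spelled out in the paper's version.
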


\begin{proof}
Observe that $|\mathrm{ind}_\pi(x) - \mathrm{ind}_{\tilde{\pi}}(x)| = 1$ if and only if the paths $\pi$ and $\tilde{\pi}$ are not simultaneously active or inactive for $x$. Since $c(x) = x_i \cdot \mathrm{ind}_\pi(x)$ and $\tilde{c}(x) = x_i \cdot \mathrm{ind}_{\tilde{\pi}}(x)$, it follows that
\[
\int_{\{x\in \Omega: \,\mathrm{ind}_\pi(x) \neq \mathrm{ind}_{\tilde{\pi}}(x)\}} |x_i| \, dx
= \int_{\Omega} |x_i| \cdot |\mathrm{ind}_\pi(x) - \mathrm{ind}_{\tilde{\pi}}(x)| \, dx
= \int_{\Omega} |c(x) - \tilde{c}(x)| \, dx,
\]
which completes the proof.
\end{proof}

Recall that the normalized $\ell_1$-norm of an integrable function $g$ over a set $\Omega$ is defined as
\[
\|g\|_1 = \frac{1}{m(\Omega)} \int_{\Omega} |g(x)| \, dx.
\]
This quantity corresponds to the average absolute value of $g$ over $\Omega$. With this notation, the formula from the previous proposition can be compactly expressed as
\[
d(\pi,\tilde{\pi}) = \|c - \tilde{c}\|_1.
\]
In other words, the distance between two paths can be interpreted simply as the $\ell_1$ distance between their respective contribution functions.

In these terms, the set of paths can itself be regarded as a normed space, by identifying each path $p$ originating from the coordinate $x_r$ with its contribution function $c$. Specifically, we define
\[
\|\pi\|_1 = \|c\|_1 = \frac{1}{m(\Omega)} \int_{\Omega} |c(x)| \, dx 
= \frac{1}{m(\Omega)} \int_{\{x\in \Omega : \,\mathrm{ind}_\pi(x) = 1\}} |x_i| \, dx.
\]
This norm will be particularly relevant when deciding which paths to prune from an estimator that is too large, since many paths in the network exhibit sufficiently small norms to be safely disregarded.

\subsection{GLAI Application Criterion Based on Path Distance}\label{subsec:path_distance}

One of the first practical questions that arises when applying the \ac{glai} framework is how many epochs are required before the structural knowledge of an \ac{mlp} becomes sufficiently mature to justify replacing it with an equivalent \ac{glai} model and retraining only its quantitative component. As previously discussed, \citet{duato} demonstrated that structural knowledge converges faster than the general knowledge of the \ac{mlp}. However, determining the precise point of convergence remains an open challenge.  

To address this, one can adapt the metric proposed in the cited work: activation patterns are computed for a set of samples at each epoch, and the distance between successive patterns for the same sample is averaged across the dataset. While this metric performs reasonably well in practice, the natural theoretical step after establishing the framework introduced here is to quantify differences in structural knowledge directly through paths, following an analogous procedure.  

Formally, let $\Omega$ be a subset of training samples. After each epoch of standard training of an \ac{mlp}, we compute the norms of the $P$ paths of the network. Suppose training proceeds for $T$ epochs, and denote by $c_1^t, \ldots, c_P^t$ the contribution functions at epoch $t \in \{1,2,\ldots,T\}$, which evolve as the network weights change (while the paths themselves remain fixed, their activations vary). We then define the metric
\[
m_t = \frac{1}{P} \sum_{p=1}^P d(c_p^t, c_p^{t+1}),
\]
which quantifies the change in structural knowledge of the \ac{mlp} across consecutive epochs.  

Figure~\ref{fig:convergence} illustrates the behavior of the proposed metric throughout training. The right $y$-axes report both the path distance $m_t$ and the distance between model weights, computed as the mean absolute difference of consecutive parameter values. The left $y$-axis displays the network’s validation loss, while the $x$-axis corresponds to training epochs. Training was carried out with a constant learning rate of $10^{-3}$ and weight decay of $10^{-2}$ using \ac{sgd}. Although more advanced optimizers or schedulers could have been employed, we deliberately opted for this simple setting to provide clean results, free from potential confounding effects due to dynamic hyperparameter updates. The comparison highlights the faster stabilization of path distances relative to model weights.

\begin{figure}[h]
\centering
\includegraphics[width=0.7\linewidth]{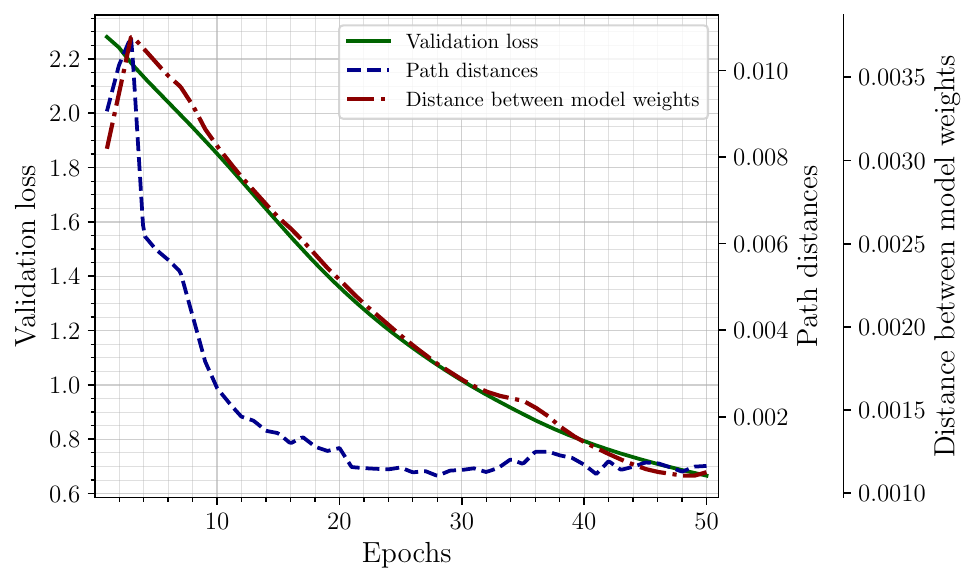}
\caption{Evolution of the path distance $m_t$ during training. The right $y$-axis corresponds to the relative error of $m_t$, while the left $y$-axis shows training and validation losses.}
\label{fig:convergence}
\end{figure}

These results confirm, within the framework of this novel path-based metric, the earlier observations of \citet{duato}: structural knowledge stabilizes significantly earlier than the full convergence of the \ac{mlp}, \emph{suggesting it can be frozen at this stage while continuing to train only the quantitative knowledge}.

\subsection{Estimator Pruning via Path Norms}\label{subsec:estimator_pruning}
Another practical issue that arises when applying the \ac{glai} framework concerns the size of the estimator. When the layers of an \ac{mlp} are compacted into a single layer, the number of paths grows exponentially with the depth of the network. Although this is not typically a severe limitation, since most \acp{mlp} used in practice rarely exceed two hidden layers, it becomes important to control this growth in order to ensure a fair comparison between a standard \ac{mlp} and its \ac{glai} counterpart. This is achieved by pruning the paths of a \ac{glai} model.  

In practice, one may start with an \ac{mlp} $\phi$ and consider an approximation $\tilde{\phi}$ with a reduced number of paths, under the expectation that this approximation remains sufficiently accurate so that the distance between $\phi$ and $\tilde{\phi}$ is small. To quantify the error between two \ac{glai} models $\phi, \tilde{\phi}:\mathbb{R}^{n_0}\to \mathbb{R}$ with $n_{L+1}=1$, over an arbitrary set $\Omega$, we use
\[
\|\phi - \tilde{\phi}\|_1 = \frac{1}{m(\Omega)} \int_{\Omega} |\phi(x) - \tilde{\phi}(x)| \, dx.
\]
If $\Omega$ is finite (for instance, the training set in a practical scenario), $\Omega = \{x_1,\ldots, x_m\}$, this reduces to
\[
\|\phi - \tilde{\phi}\|_1 = \frac{1}{m} \sum_{k=1}^m |\phi(x_k) - \tilde{\phi}(x_k)|,
\]
which corresponds to the average error across the samples $x_k$.  

More generally, if $\phi, \tilde{\phi} : \mathbb{R}^{n_0}\to \mathbb{R}^{n_{L+1}}$, the distance can be computed by summing the distances between their coordinate functions $\phi_i, \tilde{\phi}_i : \mathbb{R}^{n_0}\to \mathbb{R}$ as
\[
\|\phi - \tilde{\phi}\|_1 = \sum_{i=1}^{n_{L+1}} \|\phi_i - \tilde{\phi}_i\|_1.
\]

The next result provides a practical criterion for compressing a \ac{glai} model by reducing the number of active paths. As in the preceding sections, denote by $c_{1,i}, \ldots, c_{P_i,i}$ the contribution functions of the $P_i$ paths of $\phi$ associated to the neuron $i$ of the last layer, with $i \in \{1,\ldots, n_{L+1}\}$. Likewise, let $w_{1,i}, \ldots, w_{P_i,i}$ be the associated weights, so that $$\phi(x)_i = \sum_{p=1}^{P_i} w_{p,i} \, c_{p,i}(x)$$ for each $i\in \{1, \ldots, n_{L+1}\}$.
\begin{theorem}
Let $E \subseteq \{(p,i)\in \mathbb{N}^2 : 1 \leq p \leq P_i,\ 1 \leq i \leq n_{L+1}\}$ be the set of indices corresponding to the paths to be removed, where $(p,i) \in E$ if and only if the $p$-th path ending at neuron $i$ is eliminated. Then the pruned network $\tilde{\phi}$ obtained by removing the paths in $E$ satisfies that
\[
\|\phi - \tilde{\phi}\|_1 \;\leq\; \sum_{(p,i)\in E} |w_{p,i}| \cdot \|c_{p,i}\|_1.
\]
\end{theorem}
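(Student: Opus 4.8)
The plan is to exploit the linearity of the GLAI representation together with the coordinate-wise additivity of the multi-output $\ell_1$ distance, reducing the claim to a single application of the triangle inequality per output neuron. Since both $\phi$ and $\tilde{\phi}$ are expressed as finite sums of weighted contribution functions over the common basis $\{c_{p,i}\}$, the pruned model differs from the original only by dropping the terms indexed by $E$; consequently their difference isolates precisely the removed paths, which is exactly the quantity the right-hand side is built from.

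First I would fix an output coordinate $i \in \{1,\ldots,n_{L+1}\}$ and compute the pointwise difference of the two models. Since $\tilde{\phi}(x)_i = \sum_{p:(p,i)\notin E} w_{p,i}\, c_{p,i}(x)$ is obtained from $\phi(x)_i = \sum_{p=1}^{P_i} w_{p,i}\, c_{p,i}(x)$ by deleting the removed terms, the surviving difference is $\phi(x)_i - \tilde{\phi}(x)_i = \sum_{p:(p,i)\in E} w_{p,i}\, c_{p,i}(x)$, a linear combination of only the pruned contribution functions. This is the key structural observation, and it follows immediately from the additive form of a GLAI model.

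Next I would bound the normalized $\ell_1$ norm of this coordinate difference. Applying the triangle inequality under the integral sign (the absolute value of a sum is pointwise bounded by the sum of absolute values) and then pulling each scalar $|w_{p,i}|$ out of the integral by linearity, each coordinate contributes at most $\sum_{p:(p,i)\in E} |w_{p,i}| \cdot \|c_{p,i}\|_1$, where I recognize $\tfrac{1}{m(\Omega)}\int_\Omega |c_{p,i}(x)|\,dx$ as the path norm $\|c_{p,i}\|_1$ introduced in the previous subsection. Finally, I would sum over all output coordinates using the definition $\|\phi - \tilde{\phi}\|_1 = \sum_{i} \|\phi_i - \tilde{\phi}_i\|_1$ and collapse the resulting nested double sum over $i$ and $p$ into the single sum over the index set $E$, yielding the stated bound.

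The argument is essentially mechanical, so I do not anticipate a deep obstacle; the only point requiring mild care is bookkeeping. Specifically, I must track $E$ consistently as a set of pairs $(p,i)$, ensure the triangle inequality is applied at the level of the integrand rather than after integration (legitimate since $|\cdot|$ is subadditive pointwise), and confirm that the measure-based norm reduces to its discrete counterpart when $\Omega$ is finite. No hypotheses beyond integrability of the contribution functions over $\Omega$ are needed, and all invoked quantities are already defined in the preceding sections.
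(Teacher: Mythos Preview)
Your proposal is correct and follows essentially the same approach as the paper: isolate the pruned terms in each output coordinate, apply the triangle inequality to bound $\|\phi_i-\tilde{\phi}_i\|_1$ by $\sum_{p:(p,i)\in E}|w_{p,i}|\cdot\|c_{p,i}\|_1$, and then sum over $i$ to collapse the double sum into a single sum over $E$. The paper's only cosmetic difference is introducing the notation $E_i=\{p:(p,i)\in E\}$ for the per-coordinate slices of $E$.
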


\begin{proof}
By the definition of $E$, let $E_i = \{p : (p,i)\in E\}$ be the set of indices of the paths removed that end at neuron $i$, with $i \in \{1,\ldots,n_{L+1}\}$. Then
\[
\phi(x)_i = \tilde{\phi}(x)_i + \sum_{p\in E_i} w_{p,i} \, c_{p,i}(x).
\]
Hence,
\[
\|\phi_i - \tilde{\phi}_i\|_1 = \Big\| \sum_{p\in E_i} w_{p,i}\, c_{p,i}(x) \Big\|_1 \;\leq\; \sum_{p\in E_i} |w_{p,i}| \cdot \|c_{p,i}\|_1.
\]
Summing over all output coordinates gives
\[
\|\phi - \tilde{\phi}\|_1 = \sum_{i=1}^{n_{L+1}} \|\phi_i - \tilde{\phi}_i\|_1 \;\leq\; \sum_{i=1}^{n_{L+1}} \sum_{p\in E_i} |w_{p,i}| \cdot \|c_{p,i}\|_1
= \sum_{(p,i)\in E} |w_{p,i}| \cdot \|c_{p,i}\|_1 
\]
as claimed.
\end{proof}

Consequently, the set of paths can be pruned according to the values of the product between each path weight absolute value and its norm, namely the term $|w_{p,i}| \cdot \|c_{p,i}\|_1$ for the $p$-th path ending at neuron $i$. The error introduced by such pruning is in fact controlled, as it is bounded by the sum of these products over the discarded paths. To achieve a pruning factor $0<\mu<1$, meaning that only $100\cdot \mu \,\%$ of the paths are retained, one can compute the values $|w_{p,i}| \cdot \|c_{p,i}\|_1$ for all paths and select the top fraction corresponding to the desired quantile.

\section{Experimental Results}\label{sec:experimental_results}

The primary objective of this experimental section is to assess the practical value of the proposed \ac{glai} framework, building upon the theoretical foundations introduced earlier. Rather than focusing solely on raw speed, our goal is to show that \ac{glai} serves as an alternative to conventional fully connected heads in scenarios where such components are indispensable. In particular, we consider the widely adopted training paradigm where a lightweight head is optimized on top of a frozen backbone, a setting that naturally highlights the efficiency and stability of the approach in practical scenarios.  

To provide a broad evaluation, we design three families of experiments, each reflecting a different methodological context in which \acp{mlp} play a central role:  
\begin{enumerate}[label=(\Alph*)]
    \item \textbf{Fixed embedding classification}: specializing pretrained models on downstream tasks beyond their original training domain.  
    \item \textbf{Self-supervision}: improving representation quality when abundant unlabeled data are available, using contrastive or predictor-style objectives.  
    \item \textbf{Few-shot learning}: adapting to entirely new tasks from only a handful of labeled examples. 
\end{enumerate}

These scenarios are of particular importance across diverse application domains, such as industrial inspection (A), autonomous driving (B), and medical imaging (C), among many others. Together, they demonstrate how \ac{glai} can act as a drop-in replacement for \acp{mlp} while maintaining performance and reducing optimization burden.

Further technical details are provided in Subsection~\ref{subsec:evaluation_details}. Results are summarized in Subsection~\ref{subsec:results}, and the parameter-budget matching procedure used to ensure a fair comparison is analyzed in Subsection~\ref{subsec:GLAI_pruning}.

\subsection{Evaluation Details}\label{subsec:evaluation_details}

This subsection provides additional technical details regarding the experimental setup used throughout Section~\ref{sec:experimental_results}. Within each family, we explore multiple configurations of backbones and datasets, systematically replacing conventional \ac{mlp} heads with their \ac{glai}-based counterparts. Concretely, Family A includes experiments (A1) DeiT-S/16 \citep{vit} on Oxford-IIIT Pets \citep{oxford-IIIT-pets} and (A2) RoBERTa-base \citep{roberta} on DBPedia-14 \citep{dbpedia}. Family B covers (B1) EfficientNet-B0 \citep{efficientnet} on STL-10 unsupervised split \citep{stl-10} and (B2) GPT-2 small \citep{gpt-2} on WikiText-2 without labels \citep{wikitext}. Finally, Family C contains (C1) MobileNetV3-S \citep{mobilenet} on Omniglot \citep{omniglot}, and (C2) XLNet-base \citep{xlnet} on AGNews \citep{dbpedia}.  

All the code used to conduct these experiments is available in the GitHub repository at \url{https://github.com/jmiravet/GLAI}.  

The architectures in each experiment were chosen to represent different scales of the replaced models. We adopt the notation $(n_0, n_1, \ldots, n_L, n_{L+1})$, where $n_0$ and $n_{L+1}$ denote the number of input and output units respectively, and $n_1, \ldots, n_L$ the number of neurons in each of the $L$ intermediate layers. The selected architectures are summarized in Table~\ref{table:architectures}, where for each experiment (A1–C2) we list the corresponding tuple and the total parameter count (in millions).  

\begin{table}[h]
\centering
\caption{Architectures evaluated in the experiments.}
\label{table:architectures}
\small
\centering
\resizebox{\linewidth}{!}{
\begin{tabular}{c l l c c}
\toprule
\textbf{Exp.} & \textbf{Backbone} & \textbf{Dataset} & \textbf{Arc. parameters} & \textbf{\#Params (K)} \\
\midrule
A1 & DeiT-S/16         & Oxford Pets & (384, 256, 37)   & 108 \\
A2 & RoBERTa-base      & DBPedia     & (768, 128, 4)    & 100 \\
B1 & EfficientNet-B0   & STL-10      & (1280, 640, 128) & 901 \\
B2 & GPT-2             & WikiText-2  & (768, 768, 128)  & 689 \\
C1 & MobileNetV3-Small & Omniglot    & (576, 256, 5)    & 149 \\
C2 & XLNet-base        & AGNews      & (768, 512, 4)    & 396 \\
\bottomrule
\end{tabular}
}
\end{table}

The training procedure for \ac{glai} consists of two phases. First, a conventional \ac{mlp} is trained until convergence. Subsequently, an additional training stage is carried out for a reduced number of epochs to achieve structural convergence (see \ref{subsec:path_distance}). This stage relies on a reduced-size \ac{mlp} with reduction factor $\rho$ (see \ref{subsec:GLAI_pruning}), together with the corresponding $\mu$ associated with each $\rho$. The purpose of this step is to compress the estimator while preserving the alignment between structural and quantitative knowledge.  

Regarding training setups, each family of experiments follows a dedicated protocol aligned with its intended objective. The \textbf{A-family} of experiments employs a standard supervised learning setup, where frozen backbones provide embeddings and trainable heads are optimized with cross-entropy loss for classification. This design mirrors classical fine-tuning pipelines commonly adopted in practice. The \textbf{B-family} focuses on unsupervised learning, where models act as projection heads trained with a contrastive loss. In this setting, dropout is applied only to the trainable projection layers (excluding the frozen backbone), in order to generate multiple stochastic views of the same input for the computation of the InfoNCE loss. The dropout rate is set to $0.2$ in \textbf{B1} and $0.4$ in \textbf{B2}, while the temperature parameter remains fixed at $0.07$ in both cases. No data augmentation is employed, as the goal is to isolate the effect of the projection head and measure only the cost of its forward and backward passes, avoiding additional computational overhead from repeated inferences through the backbone. Finally, the \textbf{C-family} targets few-shot learning. In \textbf{C1}, we adopt a 5-way 4-shot 6-query configuration, where each episode comprises 20 support images (4 per class) and 30 query images (6 per class). In \textbf{C2}, the setup extends to relation extraction with a 5-way 5-shot 10-query configuration, yielding 25 support sentences and 50 query sentences per episode. Both C-family experiments follow the meta-learning paradigm: support examples are used for adaptation within each episode, while query examples evaluate generalization to unseen samples of the same classes.  


All conventional \ac{mlp} baselines, including the reduced \ac{mlp}, are optimized with \ac{sgd} under the conditions reported in Table~\ref{table:training}. For the \ac{glai} estimator, we use Adam. This choice is motivated by the fact that, after the structural component has been fixed, \ac{glai} no longer optimizes the same parameter space as a standard \ac{mlp}: training is performed over path weights in a linear estimator defined on frozen path contributions. This induces a different optimization geometry, with gradients that can be more anisotropic and a loss landscape that is often flatter than in the original \ac{mlp} parameterization. In this setting, using the same optimizer is not necessarily the fairest comparison, since \ac{sgd} and adaptive methods interact differently with each parameterization. We therefore keep \ac{sgd} for the conventional \acp{mlp}, where it provides a simple and stable baseline with minimal optimizer-induced effects, and use Adam for the \ac{glai} estimator, where coordinate-wise adaptive updates provide more stable optimization. Importantly, the reported wall-clock times include the full cost of the optimizer used in each case, so the measured speedups correspond to the actual training pipelines being compared. A deeper study of optimizer behavior and loss-geometry in the \ac{glai} parameter space is left for future work. Dropout was employed exclusively in the B-family experiments. The remaining hyperparameters are summarized in Table~\ref{table:training}, including learning rate (LR) and batch size (BS), which are identical for both \acp{mlp} and \ac{glai}. The table also reports the weight decay (WD) values, which differ between \acp{mlp} and their \ac{glai} counterparts. This difference is intentional: WD was selected independently for each architecture so as to use the regularization setting that most benefits each model. In particular, after the replacement procedure, the \ac{glai} estimator has a simpler, effectively single-layer structure, which makes it more prone to overfitting when weak regularization is used. For this reason, \ac{glai} requires a stronger WD than the baseline \ac{mlp} in order to improve generalization and provide a fair comparison between well-tuned versions of both approaches. Convergence is determined by early stopping with identical parameter settings for each \ac{mlp} and its corresponding \ac{glai}, ensuring a fair comparison. Validation accuracy is monitored for the A- and C-families, whereas validation loss is monitored for the B-family. Patience and minimum delta parameters are set according to the experiment: patience of 5 and min delta of 0.1 for A1 and A2; patience of 5 and min delta of $10^{-5}$ for B1; patience of 5 and min delta of 0.01 for B2; and patience of 30 with min delta of 1 for the C-family.

For \ac{glai} training, the values of WD and $\rho$ (the reduction factor) are reported, along with the number of epochs dedicated to the reduced \ac{mlp} stage (Red. Epochs in Table~\ref{table:training}), which is chosen to guarantee structural convergence. Finally, the total number of epochs required for \ac{glai} training is reported under Epochs to conv., representing the sum of the reduced \ac{mlp} training epochs and the subsequent epochs until full convergence of the estimator.

\begin{table}[h]
\centering
\caption{Training configurations and results. Experiments are identified by code only. Epochs to convergence are reported as mean $\pm$ std across seeds.}
\label{table:training}
\resizebox{\linewidth}{!}{
\begin{tabular}{@{\extracolsep{\fill}} c c c c c c c c c}
\toprule
\multirow{2}{*}{Exp.} & 
\multirow{2}{*}{LR} & 
\multirow{2}{*}{BS} & 
\multicolumn{2}{c}{MLP Training} & 
\multicolumn{4}{c}{GLAI Training} \\
\cmidrule(lr){4-5} \cmidrule(lr){6-9}
 & & & WD & Epochs to conv. & WD & $\rho$ & Red. epochs & Epochs to conv. \\
\midrule
A1 & 0.001  & 16 & 0.001 & $60.00 \pm 0.00$   & 0.1 & 0.5 & 20 & $14.33 \pm 3.21$ \\
A2 & 0.001  & 16 & 0.001 & $37.33 \pm 3.06$   & 0.1 & 0.5 & 20 & $6.67 \pm 0.58$ \\
B1 & 0.001  & 16 & 0.001 & $31.33 \pm 2.08$   & 0.1 & 0.5 & 60 & $8.33 \pm 3.51$ \\
B2 & 0.0001 & 16 & 0.01  & $9.33 \pm 1.53$    & 0.1 & 0.5 & 2  & $2.00 \pm 0.00$ \\
C1 & 0.001  & 16 & 0.001 & $154.00 \pm 110.69$ & 0.1 & 0.5 & 30 & $43.67 \pm 6.81$ \\
C2 & 0.001  & 32 & 0.01  & $78.00 \pm 38.11$  & 0.1 & 0.5 & 30 & $43.00 \pm 7.81$ \\
\bottomrule
\end{tabular}}
\end{table}

Determining when the structural knowledge of an MLP has stabilized is a challenging problem, both theoretically and in practice, as it requires understanding the dynamics of activation patterns and designing reliable proxies to measure them. In this work, we adopt a simple and reproducible heuristic. For each experiment, we first estimate the number of epochs required for full convergence of the corresponding MLP, and then train the reduced MLP for a fixed fraction of this budget: 20\% for Family A, 15\% for Family B, and 10\% for Family C. These values reflect the differences across training regimes and were found to provide a reasonable trade-off between structural maturity and efficiency. While this rule does not explicitly measure structural convergence, it offers a consistent practical criterion across all experiments.

All experiments were implemented in PyTorch and executed on a compute node running Ubuntu 18.04.5 LTS (Linux kernel 4.15.0). The node is equipped with two AMD EPYC 7282 processors (32 physical cores, 64 threads in total; 2 NUMA nodes; base frequency 1.5 GHz, maximum frequency 2.8 GHz) and ten NVIDIA A100 GPUs, each with 80 GB of memory. For all reported experiments, only a single A100 GPU was allocated and used. Jobs were submitted through Slurm with a memory request of 64 GB, although actual usage remained below this threshold.

\subsection{Ensuring a Fair Comparison Between an MLP and Its GLAI Counterpart}\label{subsec:GLAI_pruning}

Once the pruning procedure for reducing the number of paths in a \ac{glai} model has been established, it is necessary to address how to guarantee a fair comparison between a conventional \ac{mlp} and its associated \ac{glai} model. This issue arises because the \ac{glai} formulation requires the underlying \ac{mlp} to perform the forward pass in order to compute the path activations, which are then weighted by the estimator.  

Our investigation has shown that a straightforward yet effective strategy is to replace the original \ac{mlp} with a smaller one, reduced by a fixed factor. Such a reduced \ac{mlp} preserves the expressivity of activation patterns, can be trained more efficiently, and yields a \ac{glai} model that outperforms the original \ac{mlp} in training time while achieving comparable accuracy. 

Formally, consider an original \ac{mlp} with layer dimensions given by the tuple $(n_0, n_1, \ldots, n_L, n_{L+1})$, where $n_0$ and $n_{L+1}$ denote the input and output dimensions, respectively, and $L \geq 1$ is the number of hidden layers. We propose reducing each hidden layer size uniformly by a factor $0<\rho<1$, resulting in a reduced \ac{mlp} with dimensions $(n_0, \rho\cdot n_1, \ldots, \rho\cdot  n_L, n_{L+1})$. To prevent a bottleneck\footnote{If $n_L = n_{L+1}$, the proposed method no longer applies and must be reconsidered. Since the experimental setups in this work do not involve architectures of this shape, we omit further discussion here.} at the final hidden layer, we require $\rho \cdot n_L\geq n_{L+1}$.

The comparison proceeds as follows. The original \ac{mlp} is trained to convergence. Its \ac{glai} counterpart is obtained from the reduced \ac{mlp}, which is trained for only a fraction of the epochs required by the original, a sufficient amount to ensure the convergence of structural knowledge as detailed in Subsection~\ref{subsec:path_distance}. At this point, the equivalent \ac{glai} model is constructed from the reduced \ac{mlp} and subsequently pruned by a factor $\mu$ to match its parameter count with that of the original \ac{mlp}. Specifically, pruning ensures that the sum of the parameters of the frozen reduced \ac{mlp} (used only to compute activations) and the parameters of the estimator equals the number of parameters in the original model.  

The pruning factor $\mu$ is determined analytically. Let the number of parameters of the original network be
\[
\text{O} \;=\; \sum_{l=0}^{L} (n_l + 1)\, n_{l+1},
\]
the number of parameters of the reduced network be
\[
\text{R} \;=\; 
\rho \cdot \Bigl( n_0 n_1 + n_L n_{L+1} + \sum_{l=0}^{L} n_{l+1} \Bigr) 
+ \rho^2 \cdot \sum_{l=1}^{L-1} n_l n_{l+1} \;+\; n_{L+1},
\]
and the number of parameters of the estimator obtained from the reduced \ac{mlp} be
\[
\text{E} \;=\; 
\rho^d \cdot \prod_{l=0}^{L+1} n_l 
+ \sum_{k=1}^{L+1} \rho^{\,L+1-k}\cdot \prod_{l=k}^{L+1} n_l.
\]
Then, the pruning factor is given by $\mu = (\text{O} - \text{R})/\text{E}$, which guarantees that the comparison between the original \ac{mlp} and its \ac{glai} counterpart is balanced.

\begin{example}
Consider experiment \textbf{B1}, where the original \ac{mlp} has architecture
$(1280,640,128)$ and $\text{O}=901\text{K}$ parameters. Using $\rho=0.5$, the
reduced \ac{mlp} has $\text{R}=225\text{K}$ parameters, while the complete
estimator contains $\text{E}=13\text{M}$ parameters before pruning.

To match the memory footprint of the original \ac{mlp}, the estimator must keep
$\text{O}-\text{R}=901\text{K}-225\text{K}=676\text{K}$ parameters. Therefore,
the required pruning factor is $\mu=(\text{O}-\text{R})/\text{E}
=676\text{K}/13\text{M}
\approx 5.2\%.$ Thus, the final \ac{glai} model combines the reduced \ac{mlp}
($225\text{K}$ parameters) with the pruned estimator ($676\text{K}$ parameters),
matching the $901\text{K}$ parameters of the original model.
\end{example}
\begin{example}
Consider a simple theoretical setting to illustrate a potential limitation of the method when applied to very deep MLPs. Although such architectures are uncommon in practice, analyzing this regime helps clarify the behavior in extreme cases.

Let us assume an MLP with depth $d$ (i.e., $d$ hidden layers) and a constant width of $n$ neurons per layer. The total number of parameters in the original MLP is then $O = d n^2$. After applying a reduction factor $\rho$, the reduced model contains $R = d \rho n^2$ parameters, while the resulting estimator has size $E = \rho^{d+1} n^{d+1}$.

The key issue emerges from the exponential dependence of $E$ on the depth $d$. As $d$ increases, the estimator can grow rapidly, potentially leading to practical inefficiencies. In this setting, the corresponding pruning factor is given by
\[
\mu = \frac{d(1 - \rho^2)}{\rho^{d+1} n^{d-1}}.
\]

To provide a concrete illustration, consider $n = 128$, $d = 4$, and $\rho = 0.5$. In this case, we obtain $\mu \approx 4 \times 10^{-5}$, an extremely small value that may negatively affect the stability or effectiveness of the model.

While the estimator naturally enables aggressive pruning, this example highlights that, in certain edge cases, the required pruning levels may become excessively restrictive. This reinforces the importance of considering architectural depth when applying the method.
\end{example}

\begin{remark}
If $\rho$ is too small, it may occur that $\mu > 1$. This means that the estimator contains fewer parameters than required to match the parameter count of the original \ac{mlp}. In this case, one may either set $\mu=1$ (i.e., avoid pruning and obtain a \ac{glai} model with fewer parameters than the original) or select a larger value of $\rho$.  
\end{remark}

\begin{figure}[tb]
    \centering
    \begin{subfigure}{0.49\linewidth}
        \centering
        \caption{Experiment A1}
        \includegraphics[width=\linewidth]{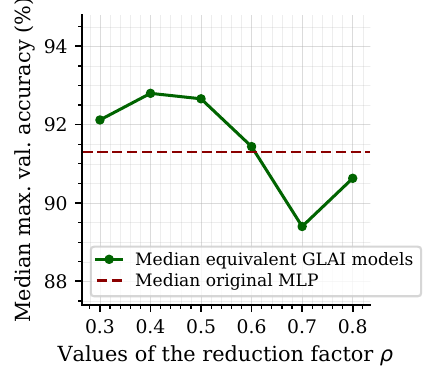}
        \label{fig:p_values_A1}
    \end{subfigure}
    \hfill
    \begin{subfigure}{0.49\linewidth}
        \centering
        \caption{Experiment C1}
        \includegraphics[width=\linewidth]{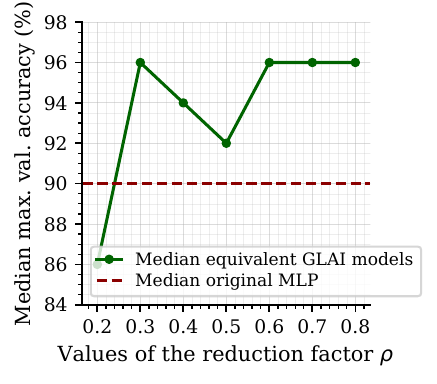}
        \label{fig:p_values_C1}
    \end{subfigure}
    \vspace{-0.5cm}
    \caption{Maximum validation accuracy obtained by the original \ac{mlp} and its \ac{glai} counterparts for different reduction factors $\rho$.}
    \label{fig:p_values}
\end{figure}

The balance between the two compression factors, $\rho$ and $\mu$, is delicate. While more sophisticated techniques for reducing the size of the \ac{mlp} could be employed, the method described here has proven sufficient for the scale of the experiments considered. Since $\mu$ is determined as a function of $\rho$, the only remaining choice is the value of $\rho$. Figure~\ref{fig:p_values} reports an ablation study on the effect of the reduction factor $\rho$ in two representative settings: experiment A1 (Oxford-IIIT Pets + DeiT-S/16) and experiment C1 (Omniglot + MobileNetV3-S). The $x$-axis reports the value of $\rho$, while the $y$-axis shows the median validation accuracy obtained by the original \ac{mlp} models and their corresponding \ac{glai} counterparts.

The results show that the effect of $\rho$ is not governed by a simple monotonic or low-order trend, reflecting the complexity of the transformations involved in replacing the original \acp{mlp} with \ac{glai} modules. Nevertheless, the intermediate value $\rho=0.5$ performs consistently well in both ablation settings. In particular, it remains robustly above the original \ac{mlp} baseline and provides a stable choice across architectures and datasets. Based on this observation, we use $\rho=0.5$ in the main experiments, as it represents a reasonable intermediate value that offers a consistent trade-off between reduction and validation performance.

\subsection{Results}\label{subsec:results}

The experimental results are summarized in Table~\ref{tab:experiments}\footnotetext{Datasets in Family B are considered in their unlabeled form to match the unsupervised learning setting.}. For each configuration (A1--C2), the table reports four alternatives: the original \ac{mlp}, a reduced \ac{mlp} using the same reduction factor adopted by \ac{glai} ($\rho=0.5$), the corresponding \ac{glai} model, and a linear head. All models are trained under the same Early Stopping criterion, so the reported number of epochs corresponds to training until convergence according to that criterion. Results are computed over three random seeds and reported as mean $\pm$ standard deviation. 

\begin{table}[ht]
\centering
\caption{Results across all experiment families (A: Fixed Embedding Classification; B: Self-Supervision\protect\footnotemark; C: Few-Shot Learning). Acronyms: \emph{Exp.} denotes experiment; \emph{Epochs} indicates the number of training epochs until early stopping, reported as mean $\pm$ std across seeds; \emph{Speedup} is the ratio of the early-stopped elapsed training time of the original \ac{mlp} to that of each counterpart; and \emph{BVS} stands for Best Validation Score (\% for accuracy, dimensionless values for loss).}
\label{tab:experiments}
\resizebox{\linewidth}{!}{
\begin{tabular}{@{\extracolsep{\fill}} c c c c c c}
\addlinespace[2pt]\toprule
Exp. & Backbone / Dataset & Head & Epochs & Speedup & BVS \\
\midrule\midrule

\multirow{4}{*}{A1} 
& \multirow{4}{*}{\shortstack{DeiT-S/16\\ \footnotesize Oxford-IIIT Pets}} 
& MLP & $60.00 \pm 0.00$ & $1\times$ & $\text{acc: }88.05 \pm 0.28\,\%$ \\
& & MLP reduced & $60.00 \pm 0.00$ & $(1.01 \pm 0.02)\times$ & $\text{acc: }86.60 \pm 1.52\,\%$ \\
& & GLAI & $14.33 \pm 3.21$ & $(2.54 \pm 0.52)\times$ & $\text{acc: }90.75 \pm 0.28\,\%$ \\
& & Linear & $53.00 \pm 7.55$ & $(1.32 \pm 0.13)\times$ & $\text{acc: }90.57 \pm 0.21\,\%$ \\

\addlinespace[2pt]\midrule

\multirow{4}{*}{A2} 
& \multirow{4}{*}{\shortstack{RoBERTa-base\\ \footnotesize DBPedia-14}} 
& MLP & $37.33 \pm 3.06$ & $1\times$ & $\text{acc: }97.70 \pm 0.01\,\%$ \\
& & MLP reduced & $37.00 \pm 1.00$ & $(1.02 \pm 0.08)\times$ & $\text{acc: }97.68 \pm 0.03\,\%$ \\
& & GLAI & $6.67 \pm 0.58$ & $(3.50 \pm 0.46)\times$ & $\text{acc: }97.89 \pm 0.16\,\%$ \\
& & Linear & $24.33 \pm 13.32$ & $(2.68 \pm 2.16)\times$ & $\text{acc: }95.44 \pm 0.04\,\%$ \\

\midrule\midrule

\multirow{4}{*}{B1} 
& \multirow{4}{*}{\shortstack{EfficientNet-B0\\ \footnotesize STL-10\tnote{1}}} 
& MLP & $31.33 \pm 2.08$ & $1\times$ & $\text{loss: }(1.63 \pm 0.55)\cdot 10^{-4}$ \\
& & MLP reduced & $23.33 \pm 2.89$ & $(0.91 \pm 0.14)\times$ & $\text{loss: }(1.87 \pm 0.25)\cdot 10^{-4}$ \\
& & GLAI & $8.33 \pm 3.51$ & $(1.05 \pm 0.40)\times$ & $\text{loss: }(1.70 \pm 0.53)\cdot 10^{-4}$ \\
& & Linear & $60.00 \pm 0.00$ & $(0.81 \pm 0.05)\times$ & $\text{loss: }(3.00 \pm 1.30)\cdot 10^{-4}$ \\

\addlinespace[2pt]\midrule

\multirow{4}{*}{B2} 
& \multirow{4}{*}{\shortstack{GPT-2 small\\ \footnotesize WikiText-2\tnote{1}}} 
& MLP & $9.33 \pm 1.53$ & $1\times$ & $\text{loss: }0.209 \pm 0.010$ \\
& & MLP reduced & $9.67 \pm 0.58$ & $(0.97 \pm 0.12)\times$ & $\text{loss: }0.220 \pm 0.013$ \\
& & GLAI & $2.00 \pm 0.00$ & $(1.56 \pm 0.25)\times$ & $\text{loss: }0.159 \pm 0.013$ \\
& & Linear & $6.33 \pm 1.15$ & $(2.30 \pm 0.26)\times$ & $\text{loss: }0.181 \pm 0.002$ \\

\midrule\midrule

\multirow{4}{*}{C1} 
& \multirow{4}{*}{\shortstack{MobileNetV3-S\\ \footnotesize Omniglot}} 
& MLP & $154.00 \pm 110.69$ & $1\times$ & $\text{acc: }90.00 \pm 8.00\,\%$ \\
& & MLP reduced & $188.33 \pm 30.66$ & $(0.91 \pm 0.64)\times$ & $\text{acc: }90.00 \pm 10.00\,\%$ \\
& & GLAI & $43.67 \pm 6.81$ & $(1.85 \pm 1.24)\times$ & $\text{acc: }96.67 \pm 3.06\,\%$ \\
& & Linear & $198.33 \pm 25.42$ & $(1.07 \pm 0.89)\times$ & $\text{acc: }95.33 \pm 6.43\,\%$ \\

\addlinespace[2pt]\midrule

\multirow{4}{*}{C2} 
& \multirow{4}{*}{\shortstack{XLNet-base\\ \footnotesize AGNews}} 
& MLP & $78.00 \pm 38.11$ & $1\times$ & $\text{acc: }52.50 \pm 5.00\,\%$ \\
& & MLP reduced & $66.33 \pm 41.48$ & $(2.43 \pm 2.69)\times$ & $\text{acc: }55.00 \pm 5.00\,\%$ \\
& & GLAI & $43.00 \pm 7.81$ & $(0.92 \pm 0.48)\times$ & $\text{acc: }48.33 \pm 12.58\,\%$ \\
& & Linear & $74.67 \pm 22.14$ & $(1.39 \pm 0.59)\times$ & $\text{acc: }51.67 \pm 3.82\,\%$ \\

\bottomrule
\end{tabular}
}
\end{table}

Since the task families use different validation objectives, the BVS (Best Validation Score) is the natural metric of each setting: maximum validation accuracy for the supervised and few-shot classification experiments in Families A and C, and minimum validation loss for the self-supervised experiments in Family B. A complementary comparison of the training cost is shown in Figure~\ref{fig:training-time-breakdown}, which breaks down the normalized training time into reduced \ac{mlp} training, conversion, and \ac{glai} training stages relative to the original \ac{mlp}.

A complementary comparison of the training cost is shown in Figure~\ref{fig:training-time-breakdown}, which breaks down the normalized training time into reduced \ac{mlp} training, conversion, and \ac{glai} training stages relative to the original \ac{mlp}.

\begin{figure}[ht]
    \centering
    \includegraphics[width=\textwidth]{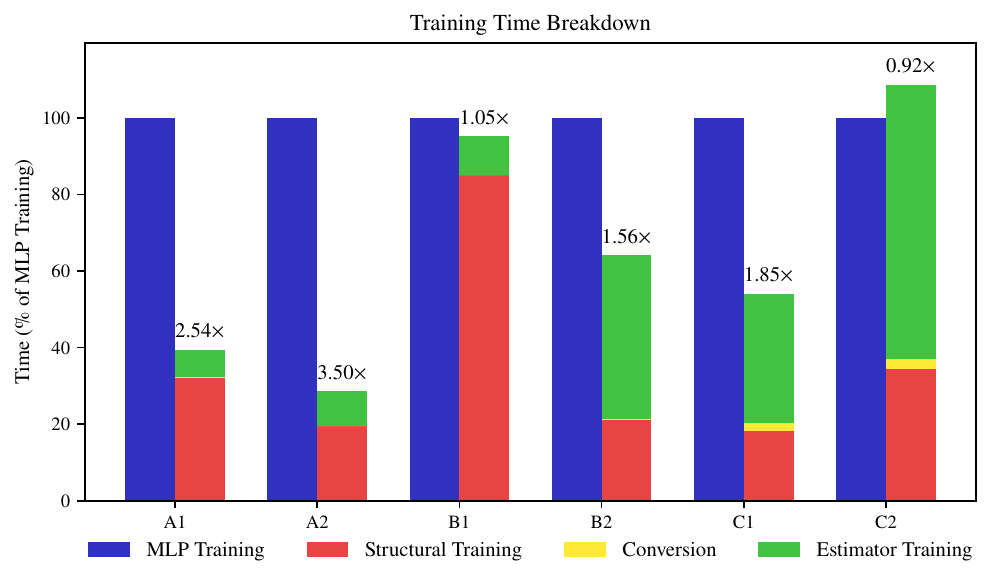}
    \caption{
        Normalized training time breakdown of \ac{glai} relative to the baseline \ac{mlp}. 
        The stacked bars show the contributions of reduced \ac{mlp} training, conversion, and \ac{glai} training. 
        Conversion time is negligible in some cases and may not be visible.
    }
    \label{fig:training-time-breakdown}
\end{figure}

The speedup column measures the ratio between the elapsed training time of the original \ac{mlp} and that of each alternative. Therefore, values above $1\times$ indicate faster training than the original \ac{mlp}, whereas values below $1\times$ indicate slower training. The Best Validation Score (BVS) reports the best validation accuracy for the supervised experiments in families A and C, and the minimum validation loss for the self-supervised experiments in family B.

In family A, \ac{glai} provides a clear improvement in training efficiency while preserving, and even slightly improving, validation accuracy. In A1, \ac{glai} reaches a speedup of $(2.54 \pm 0.52)\times$ over the original \ac{mlp}, while obtaining a higher validation accuracy ($90.75 \pm 0.28\,\%$ versus $88.05 \pm 0.28\,\%$). A similar pattern is observed in A2, where \ac{glai} achieves a speedup of $(3.50 \pm 0.46)\times$ and a validation accuracy of $97.89 \pm 0.16\,\%$, compared with $97.70 \pm 0.01\,\%$ for the original \ac{mlp}. The relatively small deviations in these experiments indicate that the gains are consistent across seeds.

The behavior in family B depends on the specific self-supervised setting. In B1, \ac{glai} and the original \ac{mlp} show comparable training times, with a speedup of $(1.05 \pm 0.40)\times$. Their validation losses are also very close: $(1.70 \pm 0.53)\cdot 10^{-4}$ for \ac{glai} and $(1.63 \pm 0.55)\cdot 10^{-4}$ for the original \ac{mlp}. Given the magnitude of the standard deviations, these differences should be interpreted as practically negligible. Thus, in this setting, \ac{glai} matches the original \ac{mlp} rather than clearly outperforming it. In contrast, B2 shows a stronger advantage for \ac{glai}: it obtains a lower validation loss ($0.159 \pm 0.013$ versus $0.209 \pm 0.010$) together with a speedup of $(1.56 \pm 0.25)\times$, suggesting a more successful replacement in this configuration.

Family C exhibits higher variability, as expected in the few-shot learning setting, where the choice of seed can have a stronger effect on the final outcome. In C1, \ac{glai} obtains a higher mean validation accuracy than the original \ac{mlp} ($96.67 \pm 3.06\,\%$ versus $90.00 \pm 8.00\,\%$) and a speedup of $(1.85 \pm 1.24)\times$. Although this suggests a favorable trend for \ac{glai}, the large standard deviations make the result less conclusive than in family A. In C2, \ac{glai} reports a speedup of $(0.92 \pm 0.48)\times$, which nominally indicates slightly slower training than the original \ac{mlp}; however, the high variability suggests that both methods have comparable training times in practice. The validation accuracies are also noisy in this setting, with \ac{glai} obtaining $48.33 \pm 12.58\,\%$ compared with $52.50 \pm 5.00\,\%$ for the original \ac{mlp}. Therefore, the few-shot experiments should be interpreted as cases where \ac{glai} performs broadly on par with the original \ac{mlp}, with trends that are informative but not as statistically stable as those observed in the fixed-embedding classification experiments.

As a final aggregate view, averaging the six per-experiment speedup ratios yields a mean speedup of $1.92\times$, so \textbf{\ac{glai} trains at nearly twice the speed of the corresponding \ac{mlp} on average}. This value is computed as the mean of ratios, not as the ratio of mean times. The dispersion is asymmetric, with deviations of $+1.25$ and $-0.96$, indicating some variability across configurations but also a longer tail toward higher speedups. Overall, the aggregate timing results show a substantial reduction in training cost across the evaluated settings.

To complement the experiment-wise discussion, we further assess the results through paired hypothesis tests across all configurations. Since the experiments span heterogeneous domains, objectives, and score scales, we do not summarize the comparison through a raw average over BVS values. Instead, we construct paired contrasts over the $(\text{experiment}, \text{seed})$ pairs, yielding $n=18$ paired observations: six experiments and three random seeds per experiment.

For training time, we define the oriented log-ratio
\[
D_{\text{time}}=\log\left(\frac{T_{\ac{mlp}}}{T_{\text{model}}}\right),
\]
so that $D_{\text{time}}>0$ indicates that the compared model converges faster than the original \ac{mlp}. For validation performance, we define an oriented relative score difference. For accuracy-based experiments,
\[
D_{\text{score}}=\frac{S_{\text{model}}-S_{\ac{mlp}}}{|S_{\ac{mlp}}|},
\]
whereas for loss-based experiments,
\[
D_{\text{score}}=\frac{S_{\ac{mlp}}-S_{\text{model}}}{|S_{\ac{mlp}}|}.
\]
Thus, in all cases, $D_{\text{score}}>0$ indicates better validation performance than the original \ac{mlp}. We then apply one-sided paired Wilcoxon signed-rank tests, where the null hypothesis corresponds to no positive paired advantage and the alternative hypothesis corresponds to faster convergence or better validation performance.

\begin{table}[tb]
\centering
\caption{One-sided paired Wilcoxon tests against the original \ac{mlp}, using $n=18$ paired observations. For time, the alternative hypothesis is that the compared model converges faster than the original \ac{mlp}. For BVS, the alternative hypothesis is that the compared model improves validation performance over the original \ac{mlp}.}
\label{tab:wilcoxon_vs_mlp}
\resizebox{0.72\linewidth}{!}{
\begin{tabular}{lcc}
\toprule
Comparison & Time $p$-value & BVS $p$-value \\
\midrule
\ac{glai} vs. \ac{mlp} & $0.0069$ & $0.0276$ \\
Reduced \ac{mlp} vs. \ac{mlp} & $0.4325$ & $0.8597$ \\
Linear vs. \ac{mlp} & $0.0152$ & $0.2475$ \\
\bottomrule
\end{tabular}
}
\end{table}

The results in Table~\ref{tab:wilcoxon_vs_mlp} support the main hypothesis of the proposed method. The comparison between \ac{glai} and the original \ac{mlp} yields a low $p$-value for training time ($p=0.0069$), providing statistical evidence that \ac{glai} converges faster across the paired experiments. The BVS comparison also favors \ac{glai} ($p=0.0276$), although with a weaker margin than in the time comparison. This indicates that the speedup is not obtained at the expense of validation performance; rather, the paired evidence suggests that \ac{glai} also improves the global validation score.

The two auxiliary baselines help qualify this result. The reduced \ac{mlp}, which uses the same reduction factor as \ac{glai} but without the proposed replacement mechanism, does not show evidence of either faster convergence ($p=0.4325$) or improved validation performance ($p=0.8597$). This suggests that the gains obtained by \ac{glai} cannot be attributed merely to reducing the size of the original \ac{mlp}. The linear model, in contrast, does show evidence of faster convergence than the original \ac{mlp} ($p=0.0152$), which is expected given its simpler structure. However, it does not show evidence of improved validation performance ($p=0.2475$). Therefore, while linear heads provide a strong efficiency baseline, they do not match the performance behavior observed for \ac{glai}.

To further contextualize the role of these simplified alternatives, we also perform paired one-sided Wilcoxon tests directly comparing \ac{glai} against the linear head and the reduced \ac{mlp}. In these contrasts, the time statistic is defined as
\[
D_{\text{time}}=\log\left(\frac{T_{\text{baseline}}}{T_{\ac{glai}}}\right),
\]
so that positive values indicate faster convergence for \ac{glai}. The BVS statistic is analogously oriented so that positive values always indicate better validation performance for \ac{glai}.

\begin{table}[tb]
\centering
\caption{One-sided paired Wilcoxon tests comparing \ac{glai} against simplified alternatives, using $n=18$ paired observations. The alternative hypothesis is that \ac{glai} converges faster or obtains better validation performance than the corresponding baseline.}
\label{tab:wilcoxon_glai_baselines}
\resizebox{0.62\linewidth}{!}{
\begin{tabular}{lcc}
\toprule
Comparison & Time $p$-value & BVS $p$-value \\
\midrule
\ac{glai} vs. Linear & $0.0649$ & $0.0564$ \\
\ac{glai} vs. Reduced \ac{mlp} & $0.0038$ & $0.0177$ \\
\bottomrule
\end{tabular}
}
\end{table}

As shown in Table~\ref{tab:wilcoxon_glai_baselines}, \ac{glai} is clearly favored over the reduced \ac{mlp} in both training time ($p=0.0038$) and validation performance ($p=0.0177$). This reinforces the conclusion that the proposed mechanism provides benefits beyond a direct reduction of the \ac{mlp} dimensionality. Against the linear model, the results are more nuanced. For the comparison against the linear model, the results are close to the conventional significance threshold in both dimensions: training time ($p=0.0649$) and BVS ($p=0.0564$). Although these values do not formally cross the $0.05$ level, they consistently favor \ac{glai} in both efficiency and validation performance. This is particularly relevant because the linear head constitutes a highly competitive efficiency-oriented baseline: it is expected to train quickly due to its reduced capacity, but this simplification does not provide the same validation behavior as \ac{glai}. Therefore, the paired tests indicate that \ac{glai} not only remains competitive with the linear alternative in terms of convergence time, but also tends to recover a stronger predictive profile across the heterogeneous experimental settings.

Overall, these paired tests provide a more robust summary than a direct average over heterogeneous experiments. \ac{glai} is the only alternative that shows statistical evidence of both faster convergence and improved validation performance with respect to the original \ac{mlp}. The reduced \ac{mlp} does not reproduce these gains, and the linear model mainly captures the efficiency side without providing a reliable improvement in validation score. Taken together, the results support \ac{glai} as an effective replacement for \acp{mlp}: it reduces the optimization burden while preserving, and in aggregate improving, validation performance across diverse experimental settings.

\section{Conclusion}\label{sec:conclusions}
In this work we have introduced \ac{glai}, a new architectural block that revisits the role of \acp{mlp} by separating structural from quantitative knowledge. Previous analyses \citep{duato} and our own theoretical and practical results (\ref{subsec:path_distance}) indicate that activation patterns stabilize significantly earlier than weights. We turn this observation into a training principle through the \ac{glai} framework: once structural knowledge stabilizes, it is fixed, and training proceeds only on the quantitative component, which naturally collapses to a single-layer linear model.

Our experimental results show that this shift translates into consistent practical gains. \ac{glai} models replace conventional \ac{mlp} heads while maintaining accuracy, yet they require slightly less than $60\%$ of the training time. Beyond speed, this reduction has tangible implications in terms of computational cost and energy use, thereby contributing to a more sustainable deployment of \ac{dl} models.

Overall, \ac{glai} emerges as an efficient alternative to conventional \acp{mlp}, grounded in both theoretical arguments and empirical evidence across diverse setups. By demonstrating that structural knowledge can be fixed early without loss of predictive power, this work opens the door to a broader line of research on path-based formulations. We expect that extending these ideas to more complex architectures may provide further insights into the interplay between expressivity, efficiency, and sustainability in modern \ac{dl}.

\section{Limitations and Future Work}

The current implementation of \ac{glai} is primarily intended for shallow \ac{mlp} modules, such as the heads and projection layers evaluated in this work. Its main practical limitation comes from the growth of the path space with depth: when the replaced \ac{mlp} contains many hidden layers, the estimator may require very aggressive pruning, which can reduce stability or limit the potential efficiency gains. A second limitation concerns the particular reduction strategy adopted in this paper, which uniformly scales the hidden layers by a factor $\rho$ and assumes $\rho \cdot n_L \ge n_{L+1}$ in order to avoid a bottleneck before the output layer. This constraint is not intrinsic to the \ac{glai} formulation itself, but to the simple compression rule used here to construct the reduced \ac{mlp}. If this condition is not satisfied in a given architecture, a different reduction or pruning strategy would be required. In many practical head and projection-layer settings, however, these conditions can be satisfied without difficulty, which is why the present formulation is sufficient for the scenarios studied in this work.

This study has focused on frozen-backbone scenarios, where the head is the main adaptation bottleneck. However, the architectural principle is not restricted to this setting. A major direction for future work is to investigate whether \ac{glai} can be extended to large-scale architectures such as transformers. This extension is non-trivial: transformer models rely on tightly coupled attention, normalization, residual pathways, and feed-forward blocks, and replacing intermediate \acp{mlp} without disrupting these interactions requires careful theoretical and empirical analysis. Nevertheless, if these challenges can be addressed, such an extension could contribute to more efficient training and inference while also opening opportunities for interpretability in large-scale models. On the other hand, defining a reliable stopping criterion based on structural knowledge remains an open problem. Although it is known that activation patterns tend to stabilize earlier than weights, translating this behavior into a robust and task-independent proxy is still unresolved. Developing such a criterion, enabling early stopping driven by structural maturity rather than validation performance, is part of our ongoing work. Lastly, we are also exploring the use of synthetic weights and structural patterns to better characterize the interaction between structure and parameters.

Another relevant direction is to study how \ac{glai} interacts with other efficiency-oriented techniques. The comparison in this work intentionally focuses on the vanilla \ac{mlp} and vanilla \ac{glai} formulations, in order to isolate the effect of the proposed architectural reformulation. However, \ac{glai} is largely orthogonal to several acceleration, compression, and distillation strategies. In principle, methods such as distillation, improved training schedules, specialized regularization, or further compression techniques could be applied on top of a \ac{glai} module, just as they can be applied to conventional \acp{mlp}. Exploring these combinations is therefore a natural direction for future work.

Finally, a deeper understanding of the optimization geometry of \ac{glai} remains necessary. The estimator does not optimize the same parameter space as a conventional \ac{mlp}: after the structural component has been fixed, training takes place over path weights, leading to a different loss landscape. In our experiments, this change in parameterization also affects the behavior of regularization and optimizers, and suggests that the \ac{glai} loss can be flatter in some regimes. This may make optimization less sensitive to some directions of descent and may explain why certain hyperparameter choices, such as stronger weight decay, are more beneficial for \ac{glai} than for the original \ac{mlp}. A systematic study of this parameter space, the geometry of the associated loss function, and the dynamics of different optimizers is left for future work.

\subsubsection*{Acknowledgments}
This research was funded by the projects PID2023-146569NB-C21 and PID2023-146569NB-C22 supported by MICIU/AEI/10.13039/501100011033 and ERDF/UE. Jose I. Mestre was supported by the predoctoral grant ACIF/2021/281 of the \emph{Generalitat Valenciana}. Alberto Fernández-Hernández was supported by the predoctoral grant PREP2023-001826 supported by MICIU/AEI/10.13039/501100011033 and ESF+. Cristian Pérez-Corral received support from the \textit{Conselleria de Educación, Cultura, Universidades y Empleo} (reference CIACIF/2024/412) through the European Social Fund Plus 2021–2027 (FSE+) program of the \textit{Comunitat Valenciana}. Manuel F. Dolz was supported by grant {\small CNS2025-165098} funded by {\small MICIU/AEI/10.13039/501100011033} and by the Plan Gen--T grant {\small CIDEXG/2022/013} of the \emph{Generalitat Valenciana}.

\bibliographystyle{elsarticle-num-names} 
\bibliography{references}



\end{document}